\documentclass[lettersize,journal]{IEEEtran}
\usepackage{amsmath,amsfonts}
\usepackage{adjustbox,bm}
\usepackage{amssymb}
\usepackage{amsthm} 
\usepackage{latexsym}
\usepackage{amsbsy}
\usepackage{mathtools}
\usepackage{algorithmic}
\usepackage{algorithm}
\usepackage{array}
\usepackage{textcomp}
\usepackage{stfloats}
\usepackage{url}
\usepackage{verbatim}
\usepackage{graphicx}
\usepackage{cite}

\usepackage{hyperref}
\usepackage{booktabs,multicol,multirow}
\usepackage{enumerate}
\usepackage[inline]{enumitem}
\usepackage{color}
\usepackage{float,caption}
\usepackage{tikz}

  \newcommand{\beq}{\begin{equation}}
    \newcommand{\eeq}{\end{equation}}
    
    \newcommand{\bal}{\begin{align}}
    \newcommand{\eal}{\end{align}}
    \newcommand{\bals}{\begin{align*}}
    \newcommand{\eals}{\end{align*}}
    
\newcommand{\XCal}{\mathcal{X}}
\newcommand{\YCal}{\mathcal{Y}}

\newcommand{\LL}{\mathcal{L}}

\newcommand{\E}{\mathbb{E}}

\newcommand{\R}{\mathbb{R}}

\newcommand{\N}{\mathbb N}

\DeclareMathOperator\argmin{\arg \min}

\newtheorem{theorem}{Theorem}[section]
\newtheorem{lemma}[theorem]{Lemma}
\newtheorem{proposition}[theorem]{Proposition}

\newtheorem{assump}[theorem]{Assumption}

\numberwithin{equation}{section}
\makeatother

\makeatletter
\def\dasharrowfill@#1#2#3#4{%
        $\m@th
        \thickmuskip0mu
        \medmuskip\thickmuskip
        \thinmuskip\thickmuskip
        \relax
        #4#1\mkern2mu
        \xleaders\hbox{$#4\mkern2mu#2\mkern2mu$}\hfill
        \mkern2mu
        #3$%
}

\def\dashleftarrowfill@{\dasharrowfill@\leftarrow\relbar\relbar}
\def\dashrightarrowfill@{\dasharrowfill@\relbar\relbar\rightarrow}
\def\dashleftrightarrowfill@{\dasharrowfill@\leftarrow\relbar\rightarrow}
\def\dashLeftarrowfill@{\dasharrowfill@\Leftarrow\Relbar\Relbar}
\def\dashRightarrowfill@{\dasharrowfill@\Relbar\Relbar\Rightarrow}
\def\dashLeftrightarrowfill@{\dasharrowfill@\Leftarrow\Relbar\Rightarrow}

\providecommand*\xdashleftarrow[2][]{%
  \ext@arrow 0055{\dashleftarrowfill@}{#1}{#2}}
\providecommand*\xdashrightarrow[2][]{%
  \ext@arrow 0055{\dashrightarrowfill@}{#1}{#2}}
\providecommand*\xdashleftrightarrow[2][]{%
  \ext@arrow 0055{\dashleftrightarrowfill@}{#1}{#2}}
\providecommand*\xdashLeftarrow[2][]{%
  \ext@arrow 0055{\dashLeftarrowfill@}{#1}{#2}}
\providecommand*\xdashRightarrow[2][]{%
  \ext@arrow 0055{\dashRightarrowfill@}{#1}{#2}}
\providecommand*\xdashLeftrightarrow[2][]{%
  \ext@arrow 0055{\dashLeftrightarrowfill@}{#1}{#2}}
\makeatother

\begin{document}

\title{Sample Complexity of Transfer Learning: An Optimal Transport Approach}

\author{Haoyang Cao, Xin Guo, Wenpin Tang, Guan Wang 
\thanks{Manuscript submitted May 19, 2026.}
\thanks{H. Cao is with Department of Applied Mathematics and Statistics, John Hopkins University, Baltimore, MD, USA.}
\thanks{X. Guo is with Department of Industrial Engineering and Operations Research, University of California, Berkeley, CA, USA.}
\thanks{W. Tang is with Department of Industrial Engineering and Operations Research, Columbia University, New York, NY, USA.}
\thanks{G. Wang is with Tsinghua-Berkeley Shenzhen Institute, Shenzhen, Guangdong, China.}
}

\markboth{
}%
{Cao \MakeLowercase{\textit{et al.}}: Sample Complexity of Transfer Learning: An Optimal Transport Approach}


\maketitle

\begin{abstract}
Transfer learning is an essential technique for many machine learning/AI models of complex structures such as large language models and generative AI. The essence of transfer learning is to leverage knowledge from resolved source tasks for a new target task, especially when the sample size $m$ of the training data for the latter is low. In this work, we rigorously analyze the potential benefit of  transfer learning in terms of sample efficiency. Specifically, taking  an optimal transport viewpoint of transfer learning, we find that when the data dimension $d$ is higher than $3$, the sample complexity for transfer learning is $O(m^{-(\alpha+1)/d})$, with $\alpha$ indicating the smoothness of the data distribution, as opposed to the $O(m^{-p/d})$ sample complexity for direct learning with $p$ indicating the smoothness of the optimal target model. Our finding theoretically supports a better sample efficiency for transfer learning, when the target task is optimizing over a family of not-so-smooth models (i.e., highly complex networks with the possible use of non-smooth activation functions). Using image classification as an example, we numerically demonstrate the sample efficiency for transfer learning, that is, in the data hungry regime, the model performance can be significantly improved by transfer learning.
\end{abstract}

\begin{IEEEkeywords}
Transfer learning, sample complexity, optimal transport, nonparametric estimation, image classification.
\end{IEEEkeywords}

\section{Introduction}

\IEEEPARstart{T}{ransfer} learning is a machine learning technique where a model developed for a resolved task is incorporated into the training for a model on a new and related task.
The key idea is to leverage knowledge from the resolved learning problem for which the data is abundant to improve the learning outcome in the new learning task where the data are scarce. For example, it has now become a common practice in image classification to reuse models such as \texttt{ResNet-50} pretrained over a large dataset of generic images (e.g., ImageNet). Transfer learning has demonstrated empirical success across different fields and is particularly useful when the target domain has limited labeled data, and when source and target tasks share underlying structure (e.g., image features, language patterns), including medical image analysis \cite{zeng2019automatic, wang2022transfer, kim2022transfer}, natural language processing \cite{ruder2019transfer, devlin-etal-2019-bert, sung2022vl}, large multimodal models \cite{openai2023gpt4,comanici2025gemini,grattafiori2024llama}, and biomedicine \cite{xu2025ibex,apellaniz2025cancer}.


\paragraph{Transfer learning and sample complexity.}

Sample complexity refers to the number of training examples needed for a model to learn a task to a desired level of accuracy. Mathematically, this can be framed using PAC learning with prior knowledge \cite{eisenberg1990sample,hanneke2016optimal}, Bayesian approaches \cite{haussler1994bounds}, or domain adaptation bounds \cite{ben2012hardness}.
The empirical success has demonstrated that transfer learning can reduce sample complexity if the source task helps constrain the hypothesis space for the target task.
In particular, if the source and target tasks are well-aligned, fewer samples are needed to achieve generalization. Theoretically, there has been a line of works analyzing the sample complexity for various transfer learning scenarios. 
In distributional
transfer, the authors of \cite{hanneke2019value} introduce transfer exponents to measure the
target-relevant value of source samples, while \cite{cai2021nonparametric} and
\cite{reeve2021adaptive} derive minimax and adaptive rates for classification
under source--target drift. These works formalize transfer as an effective
sample-size improvement where the source
contribution depends on problem-specific relatedness. Complementary lower-bound
results show that such gains are not automatic: without appropriate
distributional information, multitask or transfer procedures may fail to improve
over target-only learning, and adaptive transfer can be substantially harder
than oracle transfer \cite{hanneke2022nofree,hanneke2023limits}. Recent robust
methods therefore aim to identify useful sources or avoid negative transfer.
For example, the authors of \cite{fan2025robust} study unreliable source data through an
``ambiguity level'' between source and target regression functions and propose
Transfer Around Boundary, which selectively uses source information near the
decision boundary while retaining risk-improvement guarantees.
The authors of \cite{tripuraneni2020theory} provide statistical guarantees for transfer learning from a representation learning perspective. 
Taking a similar representation learning perspective, the authors of \cite{du2021fewshot} establish few-shot bounds that reduce target dependence from the
ambient dimension \(d\) to an intrinsic representation dimension \(k\) once the
representation is learned. The authors of \cite{tripuraneni2021provable} provide related
upper and lower bounds for meta-learning linear representations. More recent
work relaxes exact sharing assumptions by allowing similar but non-identical
representations or outlier tasks \cite{duan2023adaptive,tian2025similar}.
Casting multi-task bandit problems into representation learning, the authors of \cite{russo2023sample} derive instance-specific lower bounds for sample complexity. The authors of \cite{brunskill2013sample} show that the sample complexity can be significantly reduced by applying transfer learning across multiple reinforcement learning tasks with unknown Markov decision processes; such advantage was also shown in \cite{oguni2014reducing}. 
The authors of \cite{agarwal2023rl} analyze representational transfer in reinforcement learning and showed that source tasks can help learn features that allow target-task algorithms to approach the sample complexity achievable with known ground-truth representations. In contextual bandits, the authors of \cite{cai2024bandits} derive minimax regret rates under covariate shift and quantified how auxiliary source data contribute to target-domain learning. The authors of \cite{mahajan2025learning} establish rigorous bounds for the sample efficiency advantage by employing transfer learning for linear reinforcement learning problems. Transfer has also been studied in structured high-dimensional models, where
source data can estimate a shared component and target data need only estimate a
sparse or low-dimensional correction
\cite{li2022highdim,tian2023glm,liu2024unified,he2024transfusion}. In modern
pretraining settings, recent theory shows that unlabeled or source-task data can
reduce downstream labeled sample complexity when they recover reusable features
\cite{ge2024provable,jonesmccormick2025provable,tahir2024features}. Related
sample-efficiency guarantees have also been developed for conditional diffusion
models \cite{cheng2025diffusion}.
The authors of \cite{rigollet2025on} adopt an entropic optimal transport approach to solve domain adaptation problems and established the corresponding sample complexity. The authors of \cite{vural2025unified} derive the sample complexity for domain adaptation for neural networks training. The authors of \cite{ren2025partial} provide guarantees on error rates and sample requirements in the presence of domain adaptation via importance sampling-based corrections.

\paragraph{Statistical optimal transport (OT).} It has emerged as a powerful framework
to compare, align, and analyze probability distributions.
Building on the foundational work of \cite{Villani09},
which formalized OT theory and its geometric structure,
a growing body of research has focused on statistical aspects such as estimation, convergence rates, and computational efficiency.
Especially, to address the computational challenges of OT,
regularized approaches such as entropic OT \cite{cuturi2013sinkhorn} and sliced OT \cite{bonneel2015sliced} have been developed, yielding scalable algorithms with statistical guarantees.
Applications span a wide range of domains, including generative modeling \cite{de2021diffusion}, domain adaptation \cite{courty2016optimal}, and causal inference \cite{torous2021optimal}, where OT provides both a metric for distributional comparison and a tool for constructing statistically principled estimators.
Recent directions explore robust OT, and connections with kernel methods \cite{MB24}, further cementing OT’s role as a unifying statistical framework for modern data analysis.
Refer to \cite{CNR24} for a textbook account.

\paragraph{Our work.} 
We take an optimal transport viewpoint of transfer learning to analyze its impact on sample complexity.
We interpret transfer learning as properly transporting the feature extraction mechanism from an resolved learning task to a new one. By casting direct learning as nonparametric estimation, we are able to provide the sample efficiencies for both approaches. In particular, we find that when the dimension of the input data is fixed, what influences the sample complexity the most is the smoothness of the optimal target model for direct learning, or the smoothness of input data distribution for transfer learning; a better smoothness condition implies a better sample complexity.

We conduct numerical experiments on image classification in two settings: the standard Office-31 dataset and the diagnosis of retinopathy of prematurity (ROP). The results show that transfer learning consistently outperforms direct learning across all levels of data availability and under all evaluation metrics, aligning with our theoretical analysis on sample complexity. The advantage of transfer learning becomes particularly pronounced as the amount of available data decreases. Details of the experiments are available on \href{https://github.com/anony4git/tl-sample-complexity.git}{GitHub}.

\paragraph{Organization of the paper.} 
We present the transfer learning problem setting and  preliminary analysis in Section~\ref{sc2}, where the general setup of nonparametric estimation can be found in Section \ref{sc21} and the corresponding transfer learning formulation is in Section \ref{sc22}.
The theoretical result is in Section \ref{sc3}
and the numerical experiments  in Section \ref{sc4}.


\paragraph{Notations.}
Throughout the paper, we specify the following notations.
\begin{itemize}[topsep=0pt,itemsep=0pt,partopsep=0pt, parsep=0pt]
\item
For a random variable $X$, $P_X$ denotes the probability distribution of $X$, and 
$\mathbb{E}X$ denotes its expectation.
\item
For $f$ a function on $U$, and $\mu$ a probability distribution on $U$,
$f_{\#} \mu$ denotes the pushforward of $\mu$ by $f$.
\item
For a function $f: \mathbb{R}^d \to \mathbb{R}$, $\nabla f: = (\partial_1 f \ldots. \partial_d f)$ denotes the gradient of $f$.
\item
For $\alpha > 0$, let $[\alpha]$ the integer part of $\alpha$.
We say that a function $f$ is of class $\mathcal{C^{\alpha}}$ if $f$ is $[\alpha]$-times continuous differentiable,
and the $[\alpha]^{th}$ derivatives of $f$ is $(\alpha - [\alpha])$-H\"older continuous.
\end{itemize}

\section{Problem Setting and Preliminary Analysis}
\label{sc2}

In this section, we fix  a problem setting for transfer learning, as in \cite{Cao23}. For  simplicity and without much loss of generality, we focus on a supervised learning framework. 

We denote \((X_T,Y_T)\) for the pair of input and output random variables taking values in \(\XCal_T\times\YCal_T\) for a target task \(T\), where  \(\XCal_T\) is assumed to be compact subset of \(\R^d\) with \(d\in\N^+\) and \(\YCal_T=\R\). The  loss for any admissible target model \(f_T\) is denoted as \(\LL_T(f_T)=\E_{X_T,Y_T}[L_T(f_T(X_T),Y_T)]\).  Here the loss function \(L_T:\YCal_T\times\YCal_T\to[0,+\infty) \) with \(L_T(y,y)=0\) for any \(y\in\YCal_T\). Note that the target loss \(\LL_T\) is subject to the joint distribution of \((X_T,Y_T)\), and we assume this joint distribution to admit an absolutely continuous density.

Let \(f^*_T\) be an optimal target model among the set of admissible target models \(A_T\subset\{f|f:\XCal_T\to\YCal_T\}\) that minimizes the loss \(\LL_T\), then one may assume that the optimal target model is given by  
\begin{equation}
\label{eq:fstar}
f_T^*(x)=\mathbb{E}(Y_T \,|\, X_T = x), \quad x \in \mathcal{X}_T.
\end{equation}
This  holds under a general class of loss functions  \(L_T:\YCal_T\times\YCal_T\to\R\) called  Bregman loss function, including mean squared error and (generalized) Kullback-Leibler divergence  \cite{banerjee2005optimality}.

Under a direct learning approach, \(f_T^*\) is estimated using  \(m\in\N^+\) i.i.d. samples of \((X_T,Y_T)\) available in the dataset \(\mathcal D_T=\{(X_T^i,Y_T^i)\}_{i=1}^m\). 
In contrast, transfer learning is to construct an estimate of \(f_T^*\) by exploiting some already resolved problem,  called the source task. Let \((X_S,Y_S)\in\XCal_S\times\YCal_S\) be the pair of input and output random variables for \(S\) with \(\XCal_S=\XCal_T\) and \(\YCal_S=\YCal_T\), let \(f_S^*:\XCal_S\to\YCal_S\) be the optimal source model, called the pretrained model, that minimizes some loss \(\LL_S\) which is subject to the joint distribution of \((X_S,Y_S)\). We also assume that the joint distribution of \((X_S,Y_S)\) admits an absolutely continuous density function. 

When the source task is sufficiently similar to the target task, the pretrained model \(f_S^*\) is believed to contain a feature extractor unit that is suitable for both tasks, in the following sense:
\begin{equation}\label{eq:tl-idea}
f^*_T\approx\mathcal T^{\YCal*}_{S\to T}\circ f_S^*\circ\mathcal T^{\XCal*}_{T\to S};\end{equation} 
here, \(\mathcal T^{\XCal*}_{T\to S}:\XCal_T\to\XCal_S\) aims to transport \(Law(X_T)\) to \(Law(X_S)\), and \(\mathcal T^{\YCal*}_{S\to T}\) aims to transport \(Law(f_S^*(X_S))\) to \(Law(f_T^*(X_T))\).
It is thus natural to adopt an optimal transport viewpoint to analysis sample efficiency of such a transfer learning approach. (See \cite{Villani03, Villani09} for more details). We refer to \eqref{eq:tl-fw} as an illustration of this transfer learning framework.
\begin{equation}
    \label{eq:tl-fw}
    \begin{matrix}
    \XCal_S\ni X_S & \xRightarrow{\text{\hspace{2pt} Pretrained model } f_S^* \text{\hspace{2pt}}} & f_S^*(X_S)\in\YCal_S\\
       \mathcal T^{\XCal*}_{T\to S}\Big\Uparrow & & \Big\Downarrow \mathcal T_{S\to T}^{\YCal*} \\
        \XCal_T\ni X_T & \xdashrightarrow[\text{\hspace{2pt}}f_T^*\in\argmin_{f\in A_T} \LL_T(f_T)\text{\hspace{2pt}}]{\text{Direct learning }} & f_T^*(X_T)\in\YCal_T
    \end{matrix}
\end{equation}


\subsection{Nonparametric Estimation in Direct Learning}
\label{sc21}
In direct learning, it is to characterize the optimal relation $f_T^*: \mathcal{X}_T \to \mathcal{Y}_T$ prescribed in \eqref{eq:fstar} between $X_T$ and $Y_T$ using the sample data \(\mathcal D_T=\{(X_T^i,Y_T^i)\}_{i=1}^m\).
Such an estimation can be done by parametric or nonparametric methods.
A classical result of Stone \cite{Stone80, Stone82} shows how well
the regression function $f_T^*$ can be estimated with respect to the number of sample data \(m\).
\begin{theorem}\cite{Stone80, Stone82}
\label{thm:stone}
Assume that $f_T^*(\cdot)$ in \eqref{eq:fstar} is $p$-times differentiable. 
Then there exists an estimate $\hat{f}_m(\cdot)$ (based on $m$ samples) such that
\begin{equation}
\label{eq:stonest}
\mathbb{E}|\hat{f}_m - f_T^*|_{L^2} \le C m^{-\frac{p}{2p+d}},
\end{equation}
for some $C > 0$ (independent of $m$).
Moreover, the rate $m^{-\frac{p}{2p+d}}$ on the right side of \eqref{eq:stonest} is optimal
in some asymptotic sense.
\end{theorem}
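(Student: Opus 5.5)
The plan is the classical bias--variance argument for local polynomial regression, with the matching lower bound obtained from a family of local perturbations. I work under Stone's standing assumptions, made explicit here: $\mathcal{X}_T$ is compact; $P_{X_T}$ has a density bounded above and away from zero on $\mathcal{X}_T$; $\operatorname{Var}(Y_T\mid X_T=x)$ is bounded; and $f_T^*$ lies in a H\"older ball of order $p$, meaning it has $[p]$ derivatives and they are uniformly bounded with a Lipschitz/H\"older modulus. ``$p$-times differentiable'' should be read in this uniform sense.

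\textbf{Upper bound.} Take $\hat f_m$ to be the local polynomial estimator of degree $k=\lceil p\rceil-1$ with bandwidth $h>0$. At each $x$, fit by least squares a polynomial of degree $k$ to the pairs $(X_T^i,Y_T^i)$ with $|X_T^i-x|\le h$, and set $\hat f_m(x)$ equal to its value at $x$. If the local design matrix is ill-conditioned, set $\hat f_m(x)=0$ instead, after truncating at a known bound on $\|f_T^*\|_\infty$.

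There are three steps.
\begin{enumerate}
\item \emph{Design matrix.} The rescaled local Gram matrix $\frac{1}{mh^d}\sum_i \psi\big((X_T^i-x)/h\big)\psi\big((X_T^i-x)/h\big)^\top 1_{\{|X_T^i-x|\le h\}}$ has smallest eigenvalue bounded below by a constant $\lambda_0>0$, uniformly in $x$. This holds with probability $1-O(e^{-cmh^d})$, and follows from the lower density bound together with a matrix Bernstein inequality over a finite grid.
\item \emph{Bias.} On this good event, local polynomial fitting reproduces polynomials of degree $k$. Taylor's theorem with the H\"older remainder then gives a bias of order $h^{p}$.
\item \emph{Variance.} The variance is of order $\sigma^2/(mh^d)$, since roughly $mh^d$ points fall in each window.
\end{enumerate}
Integrating over $\mathcal{X}_T$ gives
\[
\mathbb{E}\|\hat f_m-f_T^*\|_{L^2}^2\lesssim h^{2p}+\frac{1}{mh^d}+e^{-cmh^d}.
\]
Choosing $h\asymp m^{-1/(2p+d)}$ balances the first two terms. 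Jensen's inequality then gives $\mathbb{E}\|\hat f_m-f_T^*\|_{L^2}\le Cm^{-p/(2p+d)}$.

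\textbf{Lower bound.} The optimality claim is that $\liminf_m \inf_{\hat f}\sup_f \, m^{p/(2p+d)}\mathbb{E}\|\hat f-f\|_{L^2}>0$, where the supremum runs over the H\"older class. I prove it by Assouad's lemma with Gaussian noise $Y_T=f(X_T)+\varepsilon$.
\begin{enumerate}
\item Partition $\mathcal{X}_T$ into $N\asymp h^{-d}$ cubes of side $h$.
\item Fix a smooth bump $\phi$ and put $f_\omega=\sum_j \omega_j h^p\phi\big((x-x_j)/h\big)$ for $\omega\in\{0,1\}^N$. Each $f_\omega$ lies in the H\"older class.
\item Flipping one coordinate $\omega_j$ changes the Kullback--Leibler divergence of the $m$-sample law by about $m h^{2p+d}$. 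This is $O(1)$ when $h=m^{-1/(2p+d)}$.
\item Assouad's lemma then yields a squared risk of at least $c\,N h^{2p+d}\asymp h^{2p}=m^{-2p/(2p+d)}$. Passing to the $L^1$-in-expectation norm costs only constants, by a standard reduction: apply the same argument to the $L^1$ loss per cube.
\end{enumerate}

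\textbf{Main obstacle.} I expect the hardest step to be controlling the local design matrix uniformly in $x$, including near the boundary of $\mathcal{X}_T$, where windows are truncated. For this I will assume $\mathcal{X}_T$ satisfies a cone condition, so that each truncated window still contains a fixed fraction of a ball. Non-integer $p$ is then handled by the H\"older remainder term in the Taylor expansion.
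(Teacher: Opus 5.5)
The paper gives no proof of this statement. It quotes the result from Stone and defers the meaning of ``optimal'' to Stone's definition. Your proposal is a correct, self-contained reconstruction of the classical argument. For the upper bound you use a degree-$(\lceil p\rceil-1)$ local least-squares polynomial fit at bandwidth $h\asymp m^{-1/(2p+d)}$. For the lower bound you use a hypercube of disjoint bumps of height $h^p$ with per-flip Kullback--Leibler divergence $\asymp mh^{2p+d}$. This is, in substance, the route of Stone's original papers, as far as I recall them. The differences are in formulation, and they mostly favour your version.

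\begin{itemize}
\item \emph{Boundary.} Stone, as I recall, measures the error only on a fixed compact set inside the open region where the design density is bounded away from $0$ and $\infty$. The boundary difficulty you single out therefore does not arise there. Your cone condition is the price of measuring the error on all of $\mathcal{X}_T$.
\item \emph{Probability versus expectation.} Stone's lower and achievable rates are defined in probability, via $\sup_\theta P_\theta(\|\hat T_m-T(\theta)\|>c\,b_m)$. The displayed bound in the statement is in expectation. Achievability in probability does not by itself yield the expectation bound. Your truncation at a known sup-norm bound, together with the $e^{-cmh^d}$ control of the bad design event, is exactly what supplies it. So you prove what the paper states, not only what it cites. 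In the other direction, an expectation lower bound follows from Stone's in-probability one via $\mathbb{E}Z\ge t\,P(Z>t)$, and your Assouad argument gets it directly.
\item \emph{Hypotheses.} The assumptions you make explicit are essential, not cosmetic: uniformity over a H\"older ball, a design density bounded above and below, and bounded conditional variance. For one fixed $f_T^*$, a constant ``independent of $m$'' has no content.
\end{itemize}

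One detail to pin down concerns integer $p$. The phrase ``$[p]$ derivatives \dots\ with a Lipschitz modulus'' can be read as $D^pf$ Lipschitz, which is smoothness $p+1$. In that class the bumps $h^p\phi((x-x_j)/h)$ are not uniformly contained, since their $p$-th derivatives have Lipschitz constant of order $h^{-1}$. The upper and lower bounds would then not match. Use Stone's convention instead: $p=k+\beta$ with $k=\lceil p\rceil-1$, $\beta\in(0,1]$, and $D^kf$ $\beta$-H\"older. Your choice of polynomial degree already presupposes this, and under it both the $h^p$ Taylor bias and the bump membership hold. Also, when you reduce to the $L^1$ loss per cube, state the inequality $\|g\|_{L^2}\ge c\,\|g\|_{L^1}$ on the bounded domain explicitly.
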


Refer to \cite[p.1042]{Stone82} for the precise definition of optimality stated in Theorem \ref{thm:stone}.
Note that the sample complexity \eqref{eq:stonest} of nonparametric estimation
depends on:
\begin{itemize}[topsep=0pt,itemsep=0pt,partopsep=0pt, parsep=0pt]
\item
the dimension $d$ of the target space $\mathcal{X}_T$,
\item
the smoothness $p$ of the regression function $f_T^*(x):= \mathbb{E}(Y_T \,|\, X_T =x)$.
\end{itemize}

\subsection{Transfer Learning via Optimal Transport}
\label{sc22}

When data is scarce, adopting a direct learning approach becomes challenging, especially when the training sample size 
$m$ is too small relative to the model’s complexity. In such cases, transfer learning offers a powerful alternative, by leveraging a pretrained model that has already been trained on a much larger dataset.

To formalize the transfer learning idea in \eqref{eq:tl-idea},  consider the following optimization problem,
\begin{equation}
\label{eq:OTP}
\min_{\mathcal{T}^{\mathcal{X}}_{T \to S}, \mathcal{T}^{\mathcal{Y}}_{S \to T}}\mathbb{E}\bigg(Y_T - \mathcal{T}^{\mathcal{Y}}_{S \to T} \circ f_S^* \circ \mathcal{T}^{\mathcal{X}}_{T \to S} (X_T) \bigg)^2,
\end{equation}
where
$\mathcal{T}^{\mathcal{X}}_{T \to S}$
(resp. $\mathcal{T}^{\mathcal{Y}}_{S \to T}$)
is a transfer map from $\mathcal{X}_T$ to $\mathcal{X}_S$
(resp. from $\mathcal{Y}_S$ to  $\mathcal{Y}_T$).
The transfer learning \eqref{eq:OTP} can be viewed as 
a constrained optimization problem, 
with the transfer map 
$ \mathcal{X}_S \to \mathcal{Y}_S$
set to be \(f_S^*\). 
This transfer learning is inherently different from optimal transport between different dimensions \cite{MP20, Pass12}.
Second, it is possible to use a general loss function $L_T(\cdot,\cdot)$, which yields the objective:
\begin{equation*}
\min_{\mathcal{T}^{\mathcal{X}}_{T \to S}, \mathcal{T}^{\mathcal{Y}}_{S \to T}}\mathbb{E}L_T\left(\mathcal{T}^{\mathcal{Y}}_{S \to T} \circ f_S^* \circ \mathcal{T}^{\mathcal{X}}_{T \to S} (X_T),\, Y_T \right).
\end{equation*}
The problem \eqref{eq:OTP} corresponds to the quadratic loss $L_T(y,y') = (y-y')^2$ for any \(y,y'\in\YCal_T\),
and other popular choices of the loss includes $L_T(y,y') = |y-y'|^q$ for $q > 1$. 
For ease of presentation, 
we focus on the quadratic loss, and the learning problem \eqref{eq:OTP} in the sequel.
Finally, for implementation we need to replace the expectation in \eqref{eq:OTP} with the empirical average:
\begin{equation}
\label{eq:OTPem}
\min_{\mathcal{T}^{\mathcal{X}}_{T \to S}, \mathcal{T}^{\mathcal{Y}}_{S \to T}} \frac{1}{m} \sum_{i=1}^m \bigg(Y^i_T - \mathcal{T}^{\mathcal{Y}}_{S \to T} \circ f_S^* \circ \mathcal{T}^{\mathcal{X}}_{T \to S} (X^i_T)\bigg)^2.
\end{equation}

The following lemma gives an easy sufficient condition for the problem \eqref{eq:OTP} to have a solution.
\begin{lemma}
\label{lem:ext}
Let $f_T^*(\cdot)$ be defined by \eqref{eq:fstar}, and assume that there exist transfer maps
$\mathcal{T}^{\mathcal{X}*}_{T \to S}$, $\mathcal{T}^{\mathcal{Y}*}_{S \to T}$ such that
\begin{equation}
\label{eq:assumpext}
f_T^* = \mathcal{T}^{\mathcal{Y}*}_{S \to T} \circ f_S^* \circ \mathcal{T}^{\mathcal{X}*}_{T \to S}.
\end{equation}
Then $(\mathcal{T}^{\mathcal{X}*}_{T \to S}, \mathcal{T}^{\mathcal{Y}*}_{S \to T})$ is a solution to the problem \eqref{eq:OTP}.
\end{lemma}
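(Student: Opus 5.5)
The plan is to reduce the claim to the $L^2$-optimality of the conditional expectation. For any pair of transfer maps $(\mathcal{T}^{\mathcal{X}}_{T \to S}, \mathcal{T}^{\mathcal{Y}}_{S \to T})$, set $g := \mathcal{T}^{\mathcal{Y}}_{S \to T} \circ f_S^* \circ \mathcal{T}^{\mathcal{X}}_{T \to S}$. This is some measurable function $\mathcal{X}_T \to \mathcal{Y}_T$, so the objective in \eqref{eq:OTP} is $\mathbb{E}(Y_T - g(X_T))^2$. We may assume this quantity is finite; otherwise the pair is trivially no better than the candidate. We also assume $\mathbb{E} Y_T^2 < \infty$, which is implicit in \eqref{eq:OTP} being meaningful, and note that the argument only uses measurability of the composite maps.

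The key step is the bias--variance (Pythagorean) decomposition. Write $Y_T - g(X_T) = (Y_T - f_T^*(X_T)) + (f_T^*(X_T) - g(X_T))$ and expand the square. For the cross term, condition on $X_T$. Since $f_T^*(X_T) - g(X_T)$ is $\sigma(X_T)$-measurable and $\mathbb{E}[Y_T - f_T^*(X_T) \,|\, X_T] = 0$ by \eqref{eq:fstar}, the cross term has zero expectation. This gives
\begin{equation*}
\mathbb{E}(Y_T - g(X_T))^2 = \mathbb{E}(Y_T - f_T^*(X_T))^2 + \mathbb{E}(f_T^*(X_T) - g(X_T))^2 \ge \mathbb{E}(Y_T - f_T^*(X_T))^2 .
\end{equation*}

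To conclude, assumption \eqref{eq:assumpext} says the pair $(\mathcal{T}^{\mathcal{X}*}_{T \to S}, \mathcal{T}^{\mathcal{Y}*}_{S \to T})$ yields exactly $g = f_T^*$. Its objective value therefore equals the lower bound $\mathbb{E}(Y_T - f_T^*(X_T))^2$, which holds uniformly over all admissible pairs. Hence this pair attains the minimum in \eqref{eq:OTP}.

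The only delicate point, and the expected obstacle, is integrability needed to justify the cross term vanishing. I would handle it by truncation or Cauchy--Schwarz: when both $\mathbb{E}Y_T^2$ and $\mathbb{E}(Y_T-g(X_T))^2$ are finite, $f_T^*(X_T) - g(X_T)$ lies in $L^2$, so the cross term is integrable and the tower property applies.
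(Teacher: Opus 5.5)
Your proposal is correct and follows the paper's approach. The paper's proof simply cites that $f_T^*(X_T)=\mathbb{E}(Y_T\,|\,X_T)$ is the orthogonal projection of $Y_T$ onto the $\sigma(X_T)$-measurable square-integrable variables. Your Pythagorean decomposition, together with the observation that every composite $\mathcal{T}^{\mathcal{Y}}_{S\to T}\circ f_S^*\circ\mathcal{T}^{\mathcal{X}}_{T\to S}(X_T)$ is $\sigma(X_T)$-measurable and the Cauchy--Schwarz check on the cross term, is exactly that fact written out.
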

\begin{proof}
The result follows from the fact that $f_T^*(X_T):= \mathbb{E}(Y_T \,|\, X_T)$ is the orthogonal projection 
of $Y_T$ on the $\sigma$-field generated by $X_T$.
\end{proof}

Lemma \ref{lem:ext} provides the existence of a solution to the learning problem \eqref{eq:OTP}.
In fact, there may be many (possibly infinitely many) solutions to the problem \eqref{eq:OTP},
all of which are equally ``good" because they lead to the same optimal prediction $f_T^*(\cdot)$.
However, the non-uniqueness of solutions makes the analysis subtle:
we need to choose a ``canonical" solution as the ground truth for analysis. 
Here our idea is to use optimal transport theory,
by choosing $(\mathcal{T}^{\mathcal{X}*}_{T \to S}, \mathcal{T}^{\mathcal{Y}*}_{S \to T})$
that are the (unique) solutions to some transportation problems.
To this end, we make the following assumption.
\begin{assump}
\label{assump:OTP}
There exist transfer maps $\mathcal{T}^{\mathcal{X}*}_{T \to S}$, $\mathcal{T}^{\mathcal{Y}*}_{S \to T}$,
and a random variable $X_S$ on $\mathcal{X}_S$ such that:
\begin{enumerate}[topsep=0pt,itemsep=0pt,partopsep=0pt, parsep=0pt]
\item
The relation \eqref{eq:assumpext} holds.
\item
The map $\mathcal{T}^{\mathcal{X}*}_{T \to S}$ solves the Monge problem:
\begin{equation}
\label{eq:MG1}
\min_{P_{X_S} = \mathcal{T}_{\#} P_{X_T}} \mathbb{E}(\mathcal{T}X_T - X_S)^2.
\end{equation}
\item
Denote $\bar Y_S\coloneqq f_S^*(X_S)$ and $\bar Y_T\coloneqq f_T^*(X_T^*)$, so that $P_{\bar Y_S}\coloneqq {f_S^*}_{\#} P_{X_S}$ and $P_{\bar Y_T}\coloneqq {f_T^*}_{\#}P_{X_T}$.
The map $\mathcal{T}^{\mathcal{Y}*}_{S \to T}$ solves the Monge problem:
 \begin{equation}
 \label{eq:MG2}
\min_{P_{\bar Y_T} = \mathcal{T}_{\#} P_{\bar Y_S}} \mathbb{E}(\mathcal{T}\bar Y_S - {\bar Y_T})^2.
\end{equation}
\end{enumerate}
\end{assump}
With the above assumptions, the following proposition resolves the non-uniqueness issue.
\begin{proposition}
\label{prop:unique}
The transfer maps $(\mathcal{T}^{\mathcal{X}*}_{T \to S}, \mathcal{T}^{\mathcal{Y}*}_{S \to T})$ in Assumption \ref{assump:OTP} solve the problem \eqref{eq:OTP},
and are unique almost everywhere.
\end{proposition}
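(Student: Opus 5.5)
The proof has two parts. First I show that the maps in Assumption \ref{assump:OTP} solve \eqref{eq:OTP}. Then I show they are unique almost everywhere.

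\emph{Optimality.} This is immediate. Item 1 of Assumption \ref{assump:OTP} gives the factorization \eqref{eq:assumpext}, so Lemma \ref{lem:ext} applies directly. For any candidate pair $(\mathcal{T}^{\mathcal{X}},\mathcal{T}^{\mathcal{Y}})$, the composition $g:=\mathcal{T}^{\mathcal{Y}}\circ f_S^*\circ\mathcal{T}^{\mathcal{X}}$ is a measurable function of $X_T$. Since $f_T^*(X_T)$ is the $L^2$ projection of $Y_T$ onto $\sigma(X_T)$, we get
\[
\mathbb{E}(Y_T-g(X_T))^2=\mathbb{E}(Y_T-f_T^*(X_T))^2+\mathbb{E}(f_T^*(X_T)-g(X_T))^2 .
\]
So the starred pair attains the minimum.

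\emph{Uniqueness.} I will invoke Brenier's theorem for the quadratic Monge problem (see \cite{Villani03, Villani09}).

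For the input map, the paper assumes $(X_T,Y_T)$ has an absolutely continuous joint density, so the marginal $P_{X_T}$ is absolutely continuous on the compact set $\mathcal{X}_T\subset\mathbb{R}^d$. Both $P_{X_T}$ and $P_{X_S}$ are compactly supported and hence have finite second moments. Brenier's theorem then says the Monge problem \eqref{eq:MG1} has a solution $\nabla\varphi$ with $\varphi$ convex, and this solution is unique $P_{X_T}$-a.e. Hence $\mathcal{T}^{\mathcal{X}*}_{T\to S}$ is uniquely determined a.e.

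For the output map, the problem \eqref{eq:MG2} is one-dimensional, between $P_{\bar Y_S}$ and $P_{\bar Y_T}$ on $\mathbb{R}$. In one dimension the optimal map is the monotone rearrangement $F_{\bar Y_T}^{-1}\circ F_{\bar Y_S}$. It is unique $P_{\bar Y_S}$-a.e. provided $P_{\bar Y_S}$ has no atoms.

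\emph{Main obstacle.} The hard step is justifying that $P_{\bar Y_S}={f_S^*}_{\#}P_{X_S}$ is atomless. The assumptions only give absolute continuity of the joint law of $(X_S,Y_S)$, and the pushforward of an absolutely continuous law under $f_S^*$ can have atoms, for instance if $f_S^*$ is constant on a set of positive measure.

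I would first try to read this into the hypothesis of Assumption \ref{assump:OTP}: item 3 asserts that a Monge map exists. Where that fails, I would add the mild nondegeneracy condition that $f_S^*$ has no level sets of positive $P_{X_S}$-measure. This holds, for example, if $f_S^*$ is $\mathcal{C}^1$ with $\nabla f_S^*\neq0$ a.e., by the coarea formula.

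If atoms cannot be excluded, I would fall back to a weaker statement. Uniqueness still holds for the optimal coupling, since in one dimension it is always the unique monotone coupling. It also holds for the map restricted to the atomless part of $P_{\bar Y_S}$. Together with the a.e.-uniqueness of the input map, this gives the claimed uniqueness of the pair almost everywhere.
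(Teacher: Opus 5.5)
Your route is the paper's: Lemma \ref{lem:ext} for optimality, Brenier's theorem for uniqueness. The obstacle you flag is real. The paper simply invokes Brenier for \eqref{eq:MG2}, although nothing in Assumption \ref{assump:OTP} makes $P_{\bar Y_S}$ atomless; that only follows later, from Assumption \ref{assump:OTP2}(1) together with absolute continuity of $P_{X_S}$.

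Your proposal does not close this gap. Item 3 does not rule out atoms, and the key claim of your fallback is false. With atoms, a Monge minimizer need not induce the monotone coupling, so uniqueness of the optimal coupling says nothing about the map. Moreover, the minimizer can be non-unique even on the atomless part.

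For a counterexample, take $P_{\bar Y_S}=\tfrac12\delta_0+\tfrac12\,\mathrm{Unif}[-1,1]$ and $P_{\bar Y_T}=\tfrac12(\delta_{-1}+\delta_1)$. The source law arises, e.g., from $X_S\sim\mathrm{Unif}[0,2]$ with $f_S^*=0$ on $[0,1]$ and $f_S^*(x)=2x-3$ on $(1,2]$. Any admissible map sends the atom to one of $\pm1$ and the entire uniform part to the other point. By the symmetry $y\mapsto-y$, both choices cost $7/6$. So Monge maps exist, but \eqref{eq:MG2} has two minimizers that differ on the whole atomless part, and neither induces the monotone coupling (which splits the atom). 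In particular, ``$\mathcal{T}^{\YCal*}_{S\to T}$ is the unique minimizer of \eqref{eq:MG2}'' is false under Assumption \ref{assump:OTP} alone.

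The missing idea is that you do not need Brenier for \eqref{eq:MG2} at all: item 1 pins down $\mathcal{T}^{\YCal*}_{S\to T}$ once $\mathcal{T}^{\XCal*}_{T\to S}$ is fixed. Put $g:=f_S^*\circ\mathcal{T}^{\XCal*}_{T\to S}$. Since $\mathcal{T}^{\XCal*}_{T\to S}$ pushes $P_{X_T}$ to $P_{X_S}$, we have $g_{\#}P_{X_T}={f_S^*}_{\#}P_{X_S}=P_{\bar Y_S}$.

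Now let $(\mathcal{T}^{\XCal},\mathcal{T}^{\YCal})$ be another pair satisfying Assumption \ref{assump:OTP} with the same $X_S$. Your Brenier argument gives $\mathcal{T}^{\XCal}=\mathcal{T}^{\XCal*}_{T\to S}$ $P_{X_T}$-a.e. Applying \eqref{eq:assumpext} to both pairs then yields $\mathcal{T}^{\YCal}\circ g=f_T^*=\mathcal{T}^{\YCal*}_{S\to T}\circ g$ $P_{X_T}$-a.e. Hence
$P_{\bar Y_S}(\{\mathcal{T}^{\YCal}\neq\mathcal{T}^{\YCal*}_{S\to T}\})=P_{X_T}(g^{-1}\{\mathcal{T}^{\YCal}\neq\mathcal{T}^{\YCal*}_{S\to T}\})=0$.

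This proves a.e.-uniqueness of the pair in Assumption \ref{assump:OTP} with no atomlessness hypothesis. In the example above, at most one of the two Monge minimizers is compatible with \eqref{eq:assumpext}. If you want the stronger per-problem uniqueness that the paper's proof asserts, your nondegeneracy condition on $f_S^*$ has to be added as an explicit hypothesis; it cannot be derived from Assumption \ref{assump:OTP}.
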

\begin{proof}
The fact that $(\mathcal{T}^{\mathcal{X}*}_{T \to S}, \mathcal{T}^{\mathcal{Y}*}_{S \to T})$ solve the problem \eqref{eq:OTP}
follow from Lemma \ref{lem:ext}.
By Brenier's theorem  \cite[Chapter 3]{Villani03},
$\mathcal{T}^{\mathcal{X}*}_{T \to S}$ (resp. $\mathcal{T}^{\mathcal{Y}*}_{S \to T}$) is the unique (in the sense of almost everywhere)
optimal transport map to the Monge problem in (2) (resp. in (3)).
\end{proof}

Proposition \ref{prop:unique} shows that Assumption \ref{assump:OTP} provides a canonical optimal solution
to the transfer learning problem \eqref{eq:OTP}, with additional optimal transport constraints \(P_{X_S}={\mathcal T^{\XCal}_{T\to S}}_{\#}P_{X_T}\) and \(P_{\bar Y_T}={\mathcal T^{\YCal}_{S\to T}}_{\#}P_{\bar Y_S}\).
Note that the quadratic costs in Assumption \ref{assump:OTP} (2)-(3) can be replaced with any strictly convex costs
(see \cite{GM96}).
Moreover, each of $(\mathcal{T}^{\mathcal{X}*}_{T \to S}, \mathcal{T}^{\mathcal{Y}*}_{S \to T})$,
as optimal transport maps,
can be expressed as the gradient of a convex function
that solves a Monge-Amp\`ere equation \cite{Brenier87, Brenier91}.
See also \cite{Ca922, Ca92, Ca96} for the regularity theory of optimal transport maps.


\section{Sample Complexity of Transfer Learning}
\label{sc3}

In this section, we consider the sample complexity of the transfer learning problem.
To establish the sample complexity result, 
we need some further assumptions.
\begin{assump}
\label{assump:OTP2}
Let $(\mathcal{T}^{\mathcal{X}*}_{T \to S}, \mathcal{T}^{\mathcal{Y}*}_{S \to T})$ 
be specified by Assumption \ref{assump:OTP}.
\begin{enumerate}[topsep=0pt,itemsep=0pt,partopsep=0pt, parsep=0pt]
\item
There exist $C > c > 0$ such that 
$c \le |\nabla f_S^*| \le C$.
\item
$\mathcal{T}^{\mathcal{Y}*}_{S \to T}$ is Lipschitz.
\end{enumerate}
\end{assump}

Generally, the optimal transport maps are not Lipschitz. 
A sufficient condition for $\mathcal{T}^{\mathcal{Y}*}_{S \to T}$ to be Lipschitz is that
both $P_{Y_T}$ and $P_{Y_S}$ are log-concave -- this result is known as Caffarelli's contraction theorem \cite{Car00}, 
(see also \cite{CFS24, CFJ17} for extensions).

Our main result is stated as follows. 
The proof relies on the recently developed statistical theory of optimal transport, 
see e.g., \cite{CNR24, MB24}.

\begin{theorem}
\label{thm:main}
Let Assumption \ref{assump:OTP}--\ref{assump:OTP2} hold.
Suppose that the densities of $P_{X_T}$ and $P_{X_S}$ are of class $\mathcal{C}^\alpha$ for some $\alpha > 0$.
Then there exist transfer maps $(\hat{\mathcal{T}}^{\mathcal{Y},m}_{S \to T},  \hat{\mathcal{T}}^{\mathcal{X},m}_{T \to S})$
(based on $m$ samples) such that 
\begin{equation}
\label{eq:keybd}
\mathbb{E}|\hat{\mathcal{T}}^{\mathcal{Y},m}_{S \to T} \circ  f_S^* \circ \hat{\mathcal{T}}^{\mathcal{X},m}_{T \to S} - f_T^*|_{L^2} \le C(\kappa_m + m^{-\frac{1}{2}}),
\end{equation}
for some $C > 0$ (independent of $m$), where
\begin{equation}
\label{eq:kappam}
\kappa_m: =  \left\{ \begin{array}{rcl}
m^{-\frac{1}{2}} & \mbox{for}& d = 1, \\ 
\sqrt{\log m / m} & \mbox{for} & d = 2, \\
m^{-\frac{\alpha + 1}{2 \alpha + d}} & \mbox{for} & d \ge 3.
\end{array}\right.
\end{equation}
\end{theorem}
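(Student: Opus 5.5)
We split the error by a triangle inequality and control each transport map with the statistical theory of optimal transport. Write $\hat{\mathcal{T}}^{\mathcal{X}} := \hat{\mathcal{T}}^{\mathcal{X},m}_{T \to S}$ and $\hat{\mathcal{T}}^{\mathcal{Y}} := \hat{\mathcal{T}}^{\mathcal{Y},m}_{S \to T}$, and use \eqref{eq:assumpext} to write $f_T^* = \mathcal{T}^{\mathcal{Y}*}_{S \to T} \circ f_S^* \circ \mathcal{T}^{\mathcal{X}*}_{T \to S}$. Then, in $L^2(P_{X_T})$,
\begin{align*}
|\hat{\mathcal{T}}^{\mathcal{Y}} \circ f_S^* \circ \hat{\mathcal{T}}^{\mathcal{X}} - f_T^*|_{L^2}
&\le |\hat{\mathcal{T}}^{\mathcal{Y}} \circ f_S^* \circ \hat{\mathcal{T}}^{\mathcal{X}} - \mathcal{T}^{\mathcal{Y}*}_{S \to T} \circ f_S^* \circ \hat{\mathcal{T}}^{\mathcal{X}}|_{L^2} \\
&\quad + |\mathcal{T}^{\mathcal{Y}*}_{S \to T} \circ f_S^* \circ \hat{\mathcal{T}}^{\mathcal{X}} - \mathcal{T}^{\mathcal{Y}*}_{S \to T} \circ f_S^* \circ \mathcal{T}^{\mathcal{X}*}_{T \to S}|_{L^2}.
\end{align*}
The second term is at most $\mathrm{Lip}(\mathcal{T}^{\mathcal{Y}*}_{S \to T}) \cdot C \cdot |\hat{\mathcal{T}}^{\mathcal{X}} - \mathcal{T}^{\mathcal{X}*}_{T \to S}|_{L^2(P_{X_T})}$. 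Here we use Assumption \ref{assump:OTP2} (2), together with the bound $|\nabla f_S^*| \le C$ from Assumption \ref{assump:OTP2} (1), which makes $f_S^*$ Lipschitz on the convex hull of the compact domain.

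\textbf{The $\mathcal{X}$-map.} This reduces to estimating a Brenier map between $P_{X_T}$ and $P_{X_S}$ in dimension $d$. Both are compactly supported densities of class $\mathcal{C}^\alpha$. We take $\hat{\mathcal{T}}^{\mathcal{X}}$ to be the plug-in or smoothed OT map estimator of \cite{MB24, CNR24}: estimate both densities by kernel (or wavelet) estimators, then take the Brenier map between the estimates. Under the regularity implied by Caffarelli theory \cite{Ca92, Ca96} (densities bounded above and below, convex supports, potentials of class $\mathcal{C}^{\alpha+2}$), these references give
\[
\mathbb{E}|\hat{\mathcal{T}}^{\mathcal{X}} - \mathcal{T}^{\mathcal{X}*}_{T \to S}|_{L^2} \lesssim \kappa_m + m^{-1/2}.
\]
This produces the three regimes in \eqref{eq:kappam}: the rate is $m^{-(\alpha+1)/(2\alpha+d)}$ for $d \ge 3$, with the parametric $m^{-1/2}$ and the $\sqrt{\log m/m}$ borderline for $d = 1, 2$. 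The samples of $X_S$ are assumed available from the source task; if not, we split the $m$ samples, which changes only constants.

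\textbf{The $\mathcal{Y}$-map.} This is a one-dimensional problem. The map $\mathcal{T}^{\mathcal{Y}*}_{S \to T} = F_{\bar Y_T}^{-1} \circ F_{\bar Y_S}$ is the monotone rearrangement, and we estimate it by $\hat F_{\bar Y_T}^{-1} \circ F_{\bar Y_S}$. Here $F_{\bar Y_S}$ is known because $f_S^*$ is pretrained and the law of $X_S$ is accessible. The empirical quantile function of $\bar Y_T$ is built from $Y_T^i$-based estimates, for instance from $f_T^*$ proxies obtained through the first stage.

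The first term in the decomposition is then $|\hat{\mathcal{T}}^{\mathcal{Y}} - \mathcal{T}^{\mathcal{Y}*}_{S \to T}|$ in $L^2$ of the law of $f_S^* \circ \hat{\mathcal{T}}^{\mathcal{X}}(X_T)$. This law is close to $P_{\bar Y_S}$, which has a density bounded above because $|\nabla f_S^*| \ge c$ (coarea formula). One-dimensional quantile estimation under a bounded-density condition gives $O(m^{-1/2})$ via the $W_2$ formula in terms of quantiles and DKW. The change of measure costs a further $O(|\hat{\mathcal{T}}^{\mathcal{X}} - \mathcal{T}^{\mathcal{X}*}_{T \to S}|)$ term, controlled by Lipschitzness.

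\textbf{Main obstacle.} The hardest step is making the $\mathcal{Y}$-stage rigorous. The target quantity $\bar Y_T = f_T^*(X_T)$ is not directly observed, only the noisy $Y_T$, so one must use the Lipschitz structure to transfer $W_2$ control of the pushforward $(f_S^* \circ \hat{\mathcal{T}}^{\mathcal{X}})_\# P_{X_T}$ into an $L^2$ bound on the quantile composition. I would first try to define $\hat{\mathcal{T}}^{\mathcal{Y}}$ by least-squares monotone (isotonic) regression of $Y_T^i$ on $f_S^* \circ \hat{\mathcal{T}}^{\mathcal{X}}(X_T^i)$. The one-dimensional isotonic regression risk is at most $O(m^{-1/3})$ in general. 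Because the true link is Lipschitz and the design density is bounded below (using $|\nabla f_S^*| \le C$), this should be refinable to $m^{-1/2}$ up to the $\kappa_m$ term. If that fails, fall back on the quantile-matching estimator above.
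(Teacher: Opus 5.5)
Your triangle-inequality split and your bound on the $\mathcal{X}$-term coincide with the paper's term (a): Lipschitz constants of $\mathcal{T}^{\mathcal{Y}*}_{S\to T}$ and $f_S^*$, then the plug-in estimator of \cite[Theorem 18]{MB24}. The gap is in the other term, where neither of your candidate $\mathcal{Y}$-estimators is shown to reach $m^{-1/2}$.

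\textbf{The isotonic route cannot be refined.} Consider isotonic regression of $Y_T^i$ on $g_m(X_T^i)$, with $g_m:=f_S^*\circ\hat{\mathcal{T}}^{\mathcal{X},m}_{T\to S}$. Monotone $L$-Lipschitz links on an interval form a H\"older-$1$ class. Its minimax $L^2$ rate from $m$ noisy observations is $m^{-1/3}$: this is Stone's rate with $p=d=1$, and the standard bump construction, added to a steep linear function, stays monotone. So Lipschitzness of the link and a design density bounded below buy nothing beyond $m^{-1/3}$.

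\textbf{The quantile fallback needs samples you do not have.} It requires i.i.d. draws from $P_{\bar Y_T}$ with $\bar Y_T=f_T^*(X_T)$. The empirical quantiles of the $Y_T^i$ estimate the quantiles of the noisy law $P_{Y_T}$, which differs from $P_{\bar Y_T}$ unless the labels are noiseless. Building ``$f_T^*$ proxies from the first stage'' is circular, since that stage only produces $\hat{\mathcal{T}}^{\mathcal{X},m}_{T\to S}$.

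Your concern is legitimate, but the paper does not resolve it either. It treats the $\mathcal{Y}$-stage as a one-dimensional two-sample OT problem with $m$ i.i.d. draws from the two one-dimensional laws (it speaks of samples from $P_{(Y_S,Y_T)}$), and quotes \cite[Theorem 18]{MB24} with $d=1$. You need to adopt that sampling model explicitly. Under it, your quantile estimator does give $m^{-1/2}$ in $L^2(P_{\bar Y_S})$, given a lower bound on the density of $P_{\bar Y_T}$ on its support.

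\textbf{The change of measure is a second gap.} The error term is $|\hat{\mathcal{T}}^{\mathcal{Y},m}_{S\to T}-\mathcal{T}^{\mathcal{Y}*}_{S\to T}|$ in $L^2(\mu_m)$, with $\mu_m:=(g_m)_{\#}P_{X_T}$. Your quantile/DKW bounds hold in $L^2(P_{\bar Y_S})$ instead.
\begin{itemize}
\item Passing between the two through $g:=f_S^*\circ\mathcal{T}^{\mathcal{X}*}_{T\to S}$ by the triangle inequality requires $\hat{\mathcal{T}}^{\mathcal{Y},m}_{S\to T}$ itself to be Lipschitz uniformly in $m$. A step-function quantile estimator is not.
\item Using only $W_2(\mu_m,P_{\bar Y_S})\lesssim\kappa_m$ and the upper density bound on $P_{\bar Y_S}$ gives, through the Kolmogorov distance, only something like $\kappa_m^{1/4}$.
\item Your coarea bound is on the wrong measure. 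A usable density ratio needs an upper bound on the density of $\mu_m$, uniform in $m$.
\end{itemize}
That uniform bound is what the paper's term (b) supplies. It asserts $\inf_x|\partial_1 g_m|\ge C_4$ independently of $m$, and changes variables $x\mapsto(g_m(x),x_2,\dots,x_d)$ on the compact $\XCal_T$. This gives $|\hat{\mathcal{T}}^{\mathcal{Y},m}_{S\to T}\circ g_m-\mathcal{T}^{\mathcal{Y}*}_{S\to T}\circ g_m|_{L^2}\le C_5|\hat{\mathcal{T}}^{\mathcal{Y},m}_{S\to T}-\mathcal{T}^{\mathcal{Y}*}_{S\to T}|_{L^2}$.

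To repair your version, you have two options:
\begin{itemize}
\item Run the coarea argument for $g_m$ rather than $f_S^*$. This needs in particular $|\nabla g_m|$, hence the Jacobian of $\hat{\mathcal{T}}^{\mathcal{X},m}_{T\to S}$, bounded below uniformly in $m$.
\item Use a smoothed one-dimensional plug-in estimator whose density estimates are bounded below with high probability. Then $\hat{\mathcal{T}}^{\mathcal{Y},m}_{S\to T}$ is Lipschitz on that event, and your triangle-inequality route costs only $O(\kappa_m)$.
\end{itemize}
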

\begin{proof}
By Assumption \ref{assump:OTP}, we have:
$$f_T^* = \mathcal{T}^{\mathcal{Y}*}_{S \to T} \circ f_S^* \circ \mathcal{T}^{\mathcal{X}*}_{T \to S},$$
where $\mathcal{T}^{\mathcal{X}*}_{T \to S}$ (resp. $ \mathcal{T}^{\mathcal{Y}*}_{S \to T}$)
is the optimal transport map of the problem \eqref{eq:MG1} (resp. \eqref{eq:MG2}).
For the moment, 
let $\hat{\mathcal{T}}^{\mathcal{X},m}_{T \to S}$ (resp. $\hat{\mathcal{T}}^{\mathcal{Y},m}_{S \to T}$)
be any empirical optimal transport map to the problem \eqref{eq:MG1} (resp. \eqref{eq:MG2}),
based on  $m$ i.i.d. samples from the distribution $P_{(X_T, X_S)}$ (resp. $P_{(Y_S, Y_T)}$).
Denote by
\begin{equation}
\hat{f}_m:=\hat{\mathcal{T}}^{\mathcal{Y},m}_{S \to T} \circ  f_S^* \circ \hat{\mathcal{T}}^{\mathcal{X},m}_{T \to S},
\end{equation}
the prediction via transfer learning.
By the triangle inequality, we have:
\begin{equation}
\label{eq:aplusb}
\mathbb{E}|\hat{f}_m - f_T^*|_{L^2} \le (a) + (b),
\end{equation}
where 
\begin{equation*}
\begin{aligned}
& (a):= \mathbb{E}|\mathcal{T}^{\mathcal{Y}*}_{S \to T} \circ  f_S^* \circ \hat{\mathcal{T}}^{\mathcal{X},m}_{T \to S}- \mathcal{T}^{\mathcal{Y}*}_{S \to T} \circ f_S^* \circ \mathcal{T}^{\mathcal{X}*}_{T \to S}|_{L^2}, \\
& (b):= \mathbb{E} |\hat{\mathcal{T}}^{\mathcal{Y},m}_{S \to T} \circ  f_S^* \circ \hat{\mathcal{T}}^{\mathcal{X},m}_{T \to S}- \mathcal{T}^{\mathcal{Y}*}_{S \to T} \circ  f_S^* \circ \hat{\mathcal{T}}^{\mathcal{X},m}_{T \to S}|_{L^2}.
\end{aligned}
\end{equation*}
We proceed to estimating the terms (a) and (b).

\smallskip
\fbox{Term (a)} 
By the Lipschitz assumption on $\mathcal{T}^{\mathcal{Y}*}_{S \to T}$ and $f_S^*$,
we get:
\begin{equation*}
|\mathcal{T}^{\mathcal{Y}*}_{S \to T} \circ  f_S^* \circ \hat{\mathcal{T}}^{\mathcal{X},m}_{T \to S} - \mathcal{T}^{\mathcal{Y}*}_{S \to T} \circ f_S^* \circ \mathcal{T}^{\mathcal{X}*}_{T \to S}|_{L^2}
\le C_1 |\hat{\mathcal{T}}^{\mathcal{X},m}_{T \to S} - \mathcal{T}^{\mathcal{X}*}_{T \to S}|_{L^2},
\end{equation*}
for some $C_1 > 0$.
By \cite[Theorem 18]{MB24}, there exists a (kernel) transport map $\hat{\mathcal{T}}^{\mathcal{X},m}_{T \to S}$
such that
$\mathbb{E}|\hat{\mathcal{T}}^{\mathcal{X},m}_{T \to S} - \mathcal{T}^{\mathcal{X}*}_{T \to S}|_{L^2} \le C_2 \kappa_m$
for some $C_2 > 0$,
where $\kappa_m$ is defined by \eqref{eq:kappam}.
This yields the estimate:
\begin{equation}
\label{eq:terma}
(a) \le C_3 \kappa_m, \quad \mbox{for some } C_3 > 0 \mbox{ (independent of $m$)}.
\end{equation}

\smallskip
\fbox{Term (b)} 
Denote by $g_m:= f_S^* \circ \hat{\mathcal{T}}^{\mathcal{X},m}_{T \to S}$.
We can check that 
$\inf_x \left| \partial_1 g_m \right| > C_4$ for some $C_4 > 0$ (independent of $m$).
Next we complete $g_m(x)$ with $(g_m(x), x_2, \ldots, x_n)$ to form a basis,
and obtain:
\begin{equation*}
\begin{aligned}
&|\hat{\mathcal{T}}^{\mathcal{Y},m}_{S \to T} \circ  f_S^* \circ \hat{\mathcal{T}}^{\mathcal{X},m}_{T \to S}- \mathcal{T}^{\mathcal{Y}*}_{S \to T} \circ  f_S^* \circ \hat{\mathcal{T}}^{\mathcal{X},m}_{T \to S}|_{L^2}\\
& = |\hat{\mathcal{T}}^{\mathcal{Y},m}_{S \to T} \circ  g_m - \mathcal{T}^{\mathcal{Y}*}_{S \to T} \circ  g_m|_{L^2} \\
& \le C_5 |\hat{\mathcal{T}}^{\mathcal{Y},m}_{S \to T} -\mathcal{T}^{\mathcal{Y}*}_{S \to T} |_{L^2},
\end{aligned}
\end{equation*}
where the inequality follows from a change of variables $x \leftrightarrow (g_m(x), x_2, \ldots, x_n)$,
with the Jacobian matrix of form:
\begin{equation*}
\begin{pmatrix}
(\partial_1 g_m)^{-1} & 0 &  \cdots & 0\\
* & 1 &  \cdots & 0\\
\vdots & \vdots & \ddots & \vdots  \\
* & 0 &  \cdots & 1
\end{pmatrix},
\end{equation*}
and the fact that $\mathcal{X}_T$ is a compact subset of $\mathbb{R}^d$.
Again by \cite[Theorem 18]{MB24} (specializing to $d = 1$),
there exists a (kernel) transport map $\hat{\mathcal{T}}^{\mathcal{Y},m}_{S \to T}$
such that
$\mathbb{E}|\hat{\mathcal{T}}^{\mathcal{Y},m}_{S \to T} -\mathcal{T}^{\mathcal{Y}*}_{S \to T} |_{L^2} \le C_6/\sqrt{m}$ 
for some $C_6> 0$.
As a result,
\begin{equation}
\label{eq:termb}
(b) \le \frac{C_7}{\sqrt{m}}, \quad \mbox{for some } C_7 > 0 \mbox{ (independent of $m$)}.
\end{equation}

Combining \eqref{eq:aplusb}, \eqref{eq:terma} and \eqref{eq:termb} yields the desired  result.
\end{proof}

{\bf Remark on Theorem \ref{thm:main}.}
Of particular interest is the high-dimensional case ($d \to \infty$), 
where the bound \eqref{eq:keybd} leads to:
\begin{equation}
\mathbb{E}|\hat{\mathcal{T}}^{\mathcal{Y},m}_{S \to T} \circ  f_S^* \circ \hat{\mathcal{T}}^{\mathcal{X},m}_{T \to S} - f_T^*|_{L^2} \le Cm^{-\frac{\alpha + 1}{2 \alpha + d}}.
\end{equation}
Note that the convergence rate depends on:
\begin{itemize}[topsep=0pt,itemsep=0pt,partopsep=0pt, parsep=0pt]
\item
the dimension $d$ of the target space $\mathcal{X}_T$,
\item
the smoothness $\alpha$ of the target distribution (not on the smoothness $p$ of the regressor $\mathbb{E}(Y_T \,|\, X_T = x)$).
\end{itemize}
Comparing with Theorem \ref{thm:stone}, \(L^2\) errors for the learned models are upper bounded by
\begin{equation}
 \left\{ \begin{array}{lll}
O\left(m^{-\frac{p}{d}}\right) & \mbox{for direct approach}, \\ 
O\left(m^{-\frac{\alpha + 1}{d}} \right) & \mbox{for transfer learning approach}.
\end{array}\right.
\end{equation}
So in the direct approach, the exponent is proportional to $\frac{p}{d}$,
where $p$ is the smoothness of the regression kernel $\mathbb{E}(Y_T \,|\, X_T = x)$;
while in the transfer learning case,
only the smoothness $\alpha$ of the source/target distribution matters.

A typical scenario where the transfer learning approach has a better sample complexity is 
that the regressor $\mathbb{E}(Y_T \,|\, X_T = x)$ is lesser smooth,
e.g., $p = 1$ that yields the rate $m^{-\frac{1}{d}}$;
while the source/target distributions are much smoother $(\alpha \to +\infty)$.
In particular, if
\begin{equation}
\label{eq:TLbetter}
\alpha + 1 > p,
\end{equation}
then the transfer learning approach is advantageous over the direct approach 
in terms of the sample complexity. In practice, the above situation can occur when the goal of the target task is to learn a highly complex network containing non-smooth activation functions, while the input data distributions are approximately of exponential family.

\section{Numerical Experiments}
\label{sc4}
In this section, we use image classification to study   sample complexity between transfer learning and direct learning.
We analyze numerically two cases. The first is the classification problem using the benchmark dataset, Office-31.
The second is the classification problem in diagosis for one of the eye diseases for pre-mature infants called retinopathy of prematurity (ROP). In both cases, we vary the amount of the corresponding target data used for the training data, and compare the model performance between direct learning and transfer learning.
Our experiments results are shown to be consistent with the theoretical results presented in Sections~\ref{sc3}. In particular, when data is scarce, the advantage of transfer learning over direct learning becomes most pronounced.

\paragraph{Evaluation metrics and neural network setup.} For both direct and transfer learning, we adopt the following four performance metrics for the ROP diagnosis results: 
\begin{enumerate}[topsep=0pt,itemsep=0pt,partopsep=0pt, parsep=0pt]
    \item AUROC   evaluates  classification models' ability to discriminate between positive and negative classes across all thresholds,
    \item Accuracy is the ratio of correct predictions out of all predictions, i.e., \(\frac{{\rm True Positives}+{\rm True Negatives}}{{\rm All \,\, predictions}}\),
    \item Precision is the ratio of true positives over all reported positives, i.e., \(\frac{{\rm True Positives}}{{\rm True Positives} + {\rm False Positives}}\), and
    \item Sensitivity is the number of true positives detected over the total number of positive cases in the test dataset, i.e., \(\frac{{\rm True Positives}}{{\rm True Positives} + {\rm False Negatives}}\).
\end{enumerate}
For each metric, the numerical value ranges in \([0,1]\), with higher values indicating  better learning outcome. An AUROC score of 1.0 indicates a perfect model, and 0.5 indicating random guessing.
We also report the average relative performance differences between transfer learning  and direct learning, where
{\footnotesize$$\Delta \text{metric}=\frac{\text{metric}_{\text{TL}}-\text{metric}_{\text{direct learning}}}{\text{metric}_{\text{direct learning}}}\times 100\%.$$}

The neural network for classification consists of three units: the resizing unit, the feature extractor unit implemented via \texttt{ResNet50}, and a classifier layer.

\subsection{Image Classification on the Office-31 Dataset}\label{subsec:office}

To study the sample complexity for transfer learning versus direct learning, we start with a standard  image classification problem with  a benchmark dataset Office-31 \cite{saenko2010adapting}.

In this dataset, there are three domains: Amazon (A), Webcam (W), and DSLR (D), containing 4114 images of 31 categories of objects in an office environment.  In this experiment, images from domains D and W are seen as the target data, with a total number of \({\rm total}=1296\). When applying transfer learning, illustrated in Figure \ref{fig:office-tl}, the feature extractor, with input dimension \(3\times 244\times 244\) and output dimension \(2048\), is pretrained using either ImageNet (in which case the weights are publicly available in \cite{chollet2015keras}) or domain A as the source data. The resizing unit and the classifier layer are retrained as the input and output transport mappings, respectively. 
\begin{figure*}[!ht]
    \centering
    \includegraphics[width=0.75\textwidth]{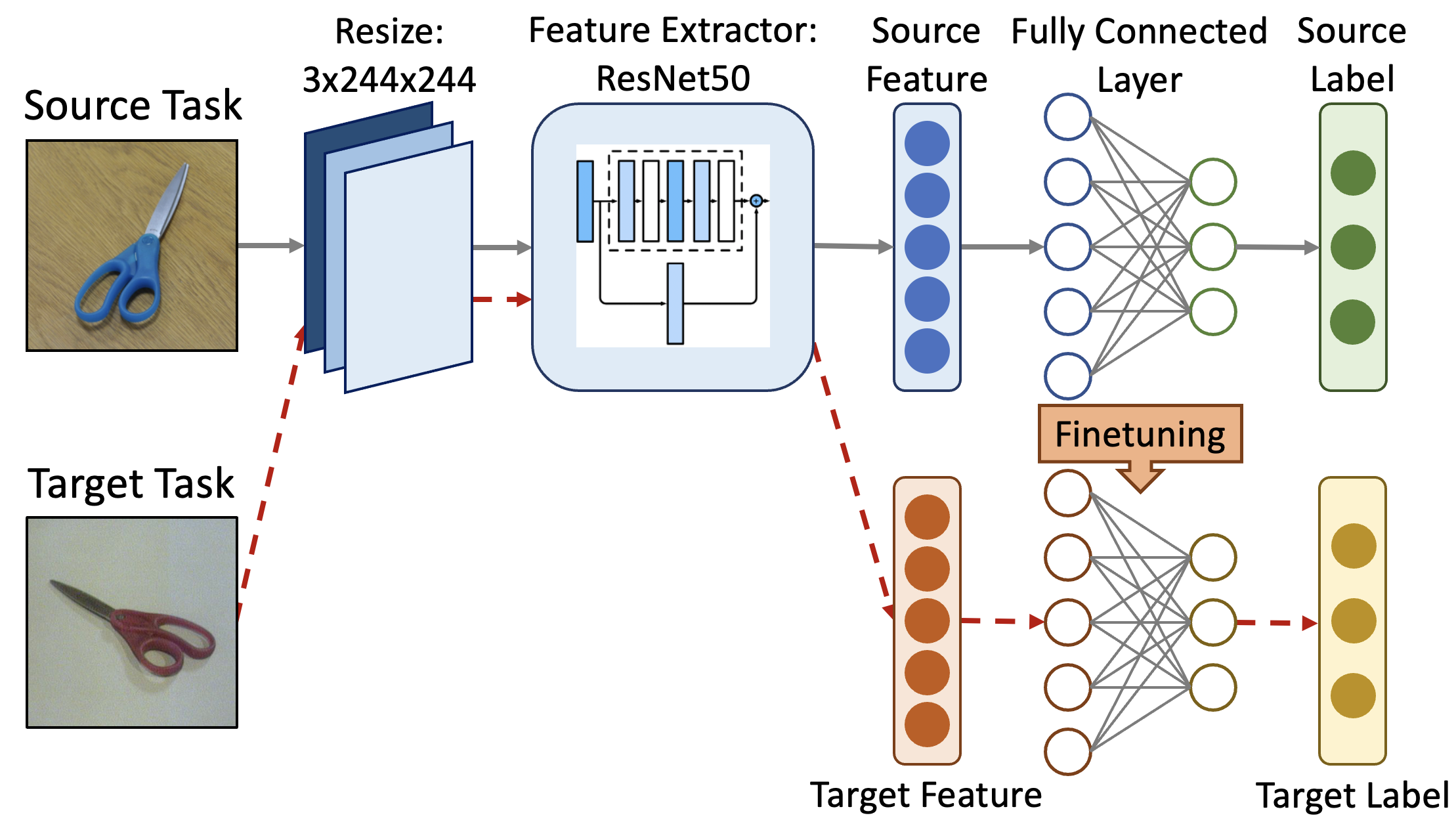}
    \caption{Illustration of the Office-31 transfer learning task.}
    \label{fig:office-tl}
\end{figure*}

By the direct learning approach, the neural network is trained with a random initialization. We vary the percentage of the target data (i.e. D+W Data \(\%\))  being used for both the training in direct learning and the fine-tuning in transfer learning, so that \(m={\rm total}\times \text{D+W Data }\%\).

\paragraph{Results.
}
The average performance evaluation for both direct and transfer learning is summarized in Table \ref{tab:office}. It is clear  that transfer learning, either using ImageNet or domain A as source data, outperforms direct learning across all metrics, and at any given level of target data availability.
At the data-scarce regime, i.e., \(\text{D+W Data \%}\leq30\%\), transfer learning significantly dominates direct learning. In the extreme case with \(10\%\) of the target data in the training set, direct learning achieves only \(0.3\) in Precision, while the transfer learning  achieves over \(0.6\). 
\begin{table*}[!ht]
\centering
\caption{Office-31 Classification Performance Evaluation}
\begin{adjustbox}{max width=\linewidth}
\begin{tabular}{c|ccc|ccc|ccc|ccc}
\toprule
\multirow{2}{*}{D+W Data \% (m)} 
& \multicolumn{3}{c|}{AUROC} 
& \multicolumn{3}{c|}{Accuracy} 
& \multicolumn{3}{c|}{Precision} 
& \multicolumn{3}{c}{Sensitivity} \\
\cmidrule{2-13}
& Direct & Transfer-A & Transfer-ImageNet 
& Direct & Transfer-A & Transfer-ImageNet 
& Direct & Transfer-A & Transfer-ImageNet 
& Direct & Transfer-A & Transfer-ImageNet \\
\midrule
100\% (1296) & 1.00 & 1.00 & 1.00 & 0.97 & 0.99 & 1.00 & 0.98 & 0.99 & 1.00 & 0.97 & 0.99 & 1.00 \\
50\% (648) & 1.00 & 1.00 & 1.00 & 0.90 & 0.94 & 1.00 & 0.91 & 0.95 & 1.00 & 0.90 & 0.95 & 1.00 \\
20\% (259) & 0.95 & 0.98 & 0.99 & 0.57 & 0.71 & 0.85 & 0.56 & 0.68 & 0.88 & 0.56 & 0.70 & 0.86 \\
10\% (130) & 0.87 & 0.94 & 0.96 & 0.35 & 0.60 & 0.72 & 0.30 & 0.60 & 0.72 & 0.34 & 0.59 & 0.73 \\
\bottomrule
\end{tabular}
\end{adjustbox}
\label{tab:office}
\end{table*}

\begin{table*}[!ht]
    \centering
    \caption{ROP Detection Performance Evaluation}
   \begin{tabular}{c|cc|cc|cc|cc}
    \toprule
         \multirow{2}{*}{ROP Data \% (m)} & \multicolumn{2}{c|}{AUROC} & \multicolumn{2}{c|}{Accuracy} & \multicolumn{2}{c|}{Precision} & \multicolumn{2}{c}{Sensitivity} \\
         \cmidrule{2-9}
          & Direct & Transfer & Direct & Transfer & Direct & Transfer & Direct & Transfer \\
          \midrule
          100\% (5838)  & 0.88 & 0.97 & 0.81 & 0.96 & 0.57 & 0.90 & 0.78 & 0.94 \\
          50\% (2919) & 0.85 & 0.97 & 0.82 & 0.95 & 0.60 & 0.88 & 0.71 & 0.90 \\

10\% (584) & 0.81 & 0.95 & 0.80 & 0.91 & 0.56 & 0.80 & 0.67 & 0.84 \\

1\% (58) & 0.72 & 0.84 & 0.73 & 0.83 & 0.44 & 0.63 & 0.49 & 0.67 \\
\bottomrule
    \end{tabular}
    \label{tab:rop}
\end{table*}

To further demonstrated the benefit of transfer learning,  Table \ref{tab:img-improve} reports the average relative performance differences between transfer learning (ImageNet version) and direct learning. Here the benefit of  transfer learning become increasingly more  prominent  as the target data becomes increasingly scarce. 
Our numerical analysis also shows that the choice of different source task and data may affect the quality of transfer learning. 

The numerical results align with the theoretical findings in Section~\ref{sc3}. From Figure \ref{fig:office-tl}, we see that the target classification model employs a \texttt{ResNet-50}-based feature extractor, containing multiple layers with \texttt{ReLU} as the activation functions; such an architecture reduces the smoothness \(p\) of the optimal target model, i.e., \(p\leq1\). The distribution of images can be approximated as a Gaussian mixture model \cite{permuter2006study} which is smooth, i.e., \(\alpha=+\infty\). These settings make \eqref{eq:TLbetter} hold.

\begin{table*}[!ht]
\centering
\caption{Relative Improvement of Transfer Learning over Direct Learning for Office-31 Classification}
\begin{adjustbox}{max width=\columnwidth}
\begin{tabular}{c|cccc}
\toprule
{D+W Data \(\%\)} & \(\Delta\text{AUROC}\) & \(\Delta\text{Accuracy}\) & \(\Delta\text{Precision}\) & \(\Delta\text{Sensitivity}\) \\
\midrule
100\% & 0.02\% & 3.25\% & 2.07\% & 2.82\% \\
50\% & 0.26\% & 11.40\% & 9.99\% & 10.98\% \\
20\% & 4.56\% & 47.95\% & 55.62\% & 54.99\% \\
10\% & 10.18\% & 109.09\% & 142.69\% & 111.77\% \\
\bottomrule
\end{tabular}
\end{adjustbox}
\label{tab:img-improve}
\end{table*}

\subsection{Detecting ROP from Detections of DR}

We now study the problem of detecting an eye disease called retinopathy of prematurity (ROP).  
To analyze the  sample efficiency of transfer learning compared with direct learning, we compare the diagnosis performances of \(\hat{\mathcal T}^{\mathcal Y,m}_{S\to T}\circ f^*_S\circ \hat{\mathcal T}^{\mathcal X,m}_{T\to S}\) and \(\hat f_m\) as we vary the sample sizes \(m\) of ROP training data. 

\paragraph{Classification problem for ROP.} 
ROP is a common retinal vascular disease primarily affecting prematurely-born infants or infants with low birth weight \cite{Fiersone20183061}, which is one of the leading causes of infant blindness globally \cite{Blencowe2013,gilbert_rahi_eckstein_osullivan_foster_1997,10.1001/archopht.121.12.1684}. To build an efficient, e.g., deep-learning based, ROP detector,  the major obstacles are the lack of sufficient data {\it and} the high accuracy requirement due to the specific case of ROP involving infants.  

Technically, the ROP detection problem is  a binary classification problem, with \(f^*_T\) being the optimal classifier. By direct learning,  the classifier is trained entirely on the ROP dataset which contains \(9727\) anonymized images in total, with \(2310\) positive samples and \(7417\) negative samples. The corresponding classifier after direct learning is \(\hat f_m\).

\paragraph{Transfer learning from DR to ROP.} By the transfer learning approach, we leverage on the  successes of  the deep learning approach for diagnosis of diabetic retinopathy (DR),  another retinal vascular disease for people with diabetes.  The pre-trained models for DP is our choice of source task for the transfer learning study. 

The DR dataset is much larger and contains \(36126\) anonymized images with \(26548\) positive samples and \(9578\) negative samples. 
Among all the 9727 ROP data, \(1945\) images (\(462\) positive samples and \(1483\) negative samples) are used as the validation data and \(1944\) images (\(461\) positive samples and \(1483\) negative samples) are reserved to be the test data. The maximum number of training samples available is \({\rm total}= 5838\) (\(1387\) positive samples and \(4451\) negative samples).

With transfer learning, as shown in  Figure \ref{fig:tl}, the feature extractor is first pretrained on DR dataset, denoted by \(f^*_S\). Then, using the ROP dataset, this feature extractor is fine-tuned with the resizing unit as the input transport mapping \(\hat{\mathcal T}^{\mathcal X, m}_{T\to S}\) and the classifier layer as the output transport mapping \(\hat{\mathcal T}^{\mathcal Y,m}_{S\to T}\).

In the experiment, we vary the percentage of ROP data  being used for both the training in direct learning and the fine-tuning in transfer learning, so that \(m={\rm total}\times \text{ROP Data }\%\).
\begin{figure*}[!ht]
    \centering
    \includegraphics[width=0.85\linewidth]{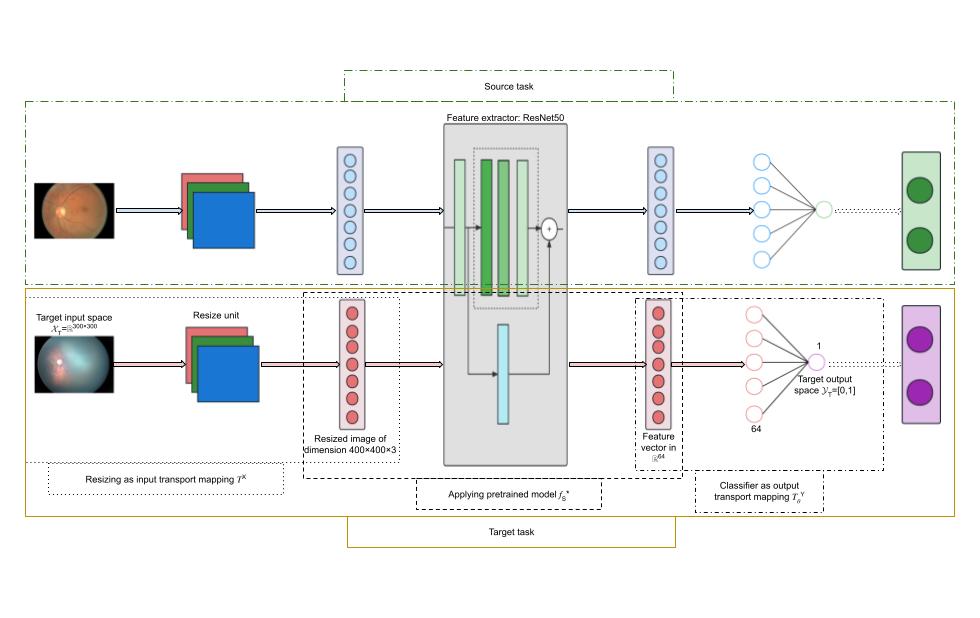}
    \caption{Transfer learning}
    \label{fig:tl}
\end{figure*}

\paragraph{Results.
}

Similar to  the Office-31 example in Section \ref{subsec:office}, 
the numerical findings for ROP  problem detailed  in Tables~\ref{tab:rop}--\ref{tab:meandiff} are (again) consistent with the theoretical results in Sections \ref{sc3}.

\begin{table*}[!ht]
 \centering
 \caption{Relative Improvement of Transfer Learning over Direct Learning for ROP Detection}
 \begin{adjustbox}{max width=\columnwidth}
 \begin{tabular}{c|c|c|c|c}
  \toprule
   ROP Data \(\%\) & \(\Delta\text{AUROC}\) & \(\Delta\text{Accuracy}\) & \(\Delta\text{Precision}\) & \(\Delta\text{Sensitivity}\) \\
   \midrule
   100\% & 10.05\% & 18.75\% & 58.01\% & 19.71\% \\
   
   50\% & 14.09\% & 15.37\% & 45.83\% & 27.19\% \\
   
   10\% & 16.58\% & 14.65\% & 43.68\% & 26.03\% \\

   1\% & 16.08\% & 13.04\%      & 46.67\% & 49.46\% \\
   \bottomrule
 \end{tabular}
 \end{adjustbox}
 \label{tab:meandiff}
\end{table*}

As shown in Table~\ref{tab:rop}, transfer learning demonstrates consistent performance improvements over direct learning across all levels of ROP data availability and under all evaluation metrics. Despite the relatively limited ROP dataset, transfer learning achieves and surpasses the critical threshold of 90 percent in every category. Remarkably, it requires only 973 images, i.e, less than 10 percent of the available ROP data,  to attain AUROC and Accuracy values exceeding 0.9. In particular, with limited ROP data, transfer learning achieves impressive scores of 0.97 for AUROC and 0.96 for Accuracy. By contrast, direct learning fails to reach the 0.9 threshold in any category, even when nearly the full ROP dataset is utilized. 
In the extreme data scarce  regime with less than $10$ percent of ROP data,  direct learning never reaches \(0.5\) under either Precision or Sensitivity.

Table \ref{tab:meandiff} further compares  the mean relative performance difference  between transfer learning and direct learning \(\Delta \text{metric}\). 
Here the advantage of transfer learning in the data-scarce regime is even more striking across the board. When the proportion of available ROP data is reduced to the extreme case of one percent (i.e., fewer than 100 images), transfer learning yields the most significant gains in both sensitivity and AUROC. These observations suggest that when the training data becomes increasingly limited, the benefit of transfer learning remains robust.  





\bibliography{ref,ref-add}

@inproceedings{agarwal2023rl,
  author    = {Agarwal, Alekh and Song, Yuda and Sun, Wen and Wang, Kaiwen and Wang, Mengdi and Zhang, Xuezhou},
  title     = {Provable Benefits of Representational Transfer in Reinforcement Learning},
  booktitle = {COLT},
  volume    = {36},
  pages     = {2114--2187},
  year      = {2023}
}

@article{cai2024bandits,
  author  = {Cai, Changxiao and Cai, T. Tony and Li, Hongzhe},
  title   = {Transfer Learning for Contextual Multi-Armed Bandits},
  journal = {Ann. Stat.},
  volume  = {52},
  number  = {1},
  pages   = {207--232},
  year    = {2024}
}

@article{cai2021nonparametric,
  author  = {Cai, T. Tony and Wei, Hongji},
  title   = {Transfer Learning for Nonparametric Classification: Minimax Rate and Adaptive Classifier},
  journal = {Ann. Stat.},
  volume  = {49},
  number  = {1},
  pages   = {100--128},
  year    = {2021}
}

@inproceedings{cheng2025diffusion,
  author    = {Cheng, Ziheng and Xie, Tianyu and Zhang, Shiyue and Zhang, Cheng},
  title     = {Provable Sample-Efficient Transfer Learning Conditional Diffusion Models via Representation Learning},
  booktitle = {Neurips},
  volume    = {38},
  year      = {2025}
}

@inproceedings{du2021fewshot,
  author    = {Du, Simon S. and Hu, Wei and Kakade, Sham M. and Lee, Jason D. and Lei, Qi},
  title     = {Few-Shot Learning via Learning the Representation, Provably},
  booktitle = {ICLR},
  year      = {2021}
}

@article{duan2023adaptive,
  author  = {Duan, Yaqi and Wang, Kaizheng},
  title   = {Adaptive and Robust Multi-Task Learning},
  journal = {Ann. Stat.},
  volume  = {51},
  number  = {5},
  pages   = {2015--2039},
  year    = {2023},
}

@article{fan2025robust,
  author  = {Fan, Jianqing and Gao, Cheng and Klusowski, Jason M.},
  title   = {Robust Transfer Learning with Unreliable Source Data},
  journal = {Ann. Stat.},
  volume  = {53},
  number  = {4},
  pages   = {1728--1752},
  year    = {2025},
}

@inproceedings{ge2024provable,
  author    = {Ge, Jiawei and Tang, Shange and Fan, Jianqing and Jin, Chi},
  title     = {On the Provable Advantage of Unsupervised Pretraining},
  booktitle = {ICLR},
  year      = {2024}
}

@inproceedings{hanneke2019value,
  author    = {Hanneke, Steve and Kpotufe, Samory},
  title     = {On the Value of Target Data in Transfer Learning},
  booktitle = {Neurips},
  volume    = {32},
  pages     = {9867--9877},
  year      = {2019}
}

@article{hanneke2022nofree,
  author  = {Hanneke, Steve and Kpotufe, Samory},
  title   = {A No-Free-Lunch Theorem for Multitask Learning},
  journal = {Ann. Stat.},
  volume  = {50},
  number  = {6},
  pages   = {3119--3143},
  year    = {2022}
}

@inproceedings{hanneke2023limits,
  author    = {Hanneke, Steve and Kpotufe, Samory and Mahdaviyeh, Yasaman},
  title     = {Limits of Model Selection under Transfer Learning},
  booktitle = {COLT},
  volume    = {36},
  pages     = {5781--5812},
  year      = {2023}
}

@inproceedings{he2024transfusion,
  author    = {He, Zelin and Sun, Ying and Li, Runze},
  title     = {{TransFusion}: Covariate-Shift Robust Transfer Learning for High-Dimensional Regression},
  booktitle = {AISTATS},
  volume    = {27},
  pages     = {703--711},
  year      = {2024}
}

@inproceedings{jonesmccormick2025provable,
  author    = {Jones-McCormick, Taj and Jagannath, Aukosh and Sen, Subhabrata},
  title     = {Provable Benefits of Unsupervised Pre-Training and Transfer Learning via Single-Index Models},
  booktitle = {ICLM},
  volume    = {42},
  pages     = {28350--28376},
  year      = {2025}
}

@article{li2022highdim,
  author  = {Li, Sai and Cai, T. Tony and Li, Hongzhe},
  title   = {Transfer Learning for High-Dimensional Linear Regression: Prediction, Estimation and Minimax Optimality},
  journal = {J. R. Stat. Soc. Ser. B Stat. Methodol.},
  volume  = {84},
  number  = {1},
  pages   = {149--173},
  year    = {2022}
}

@inproceedings{liu2024unified,
  author    = {Liu, Shuo Shuo},
  title     = {Unified Transfer Learning in High-Dimensional Linear Regression},
  booktitle = {AISTATS},
  volume    = {27},
  pages     = {1036--1044},
  year      = {2024}
}

@article{reeve2021adaptive,
  author  = {Reeve, Henry W. J. and Cannings, Timothy I. and Samworth, Richard J.},
  title   = {Adaptive Transfer Learning},
  journal = {Ann. Stat.},
  volume  = {49},
  number  = {6},
  pages   = {3618--3649},
  year    = {2021}
}

@inproceedings{tahir2024features,
  author    = {Tahir, Javan and Ganguli, Surya and Rotskoff, Grant M.},
  title     = {Features Are Fate: A Theory of Transfer Learning in High-Dimensional Regression},
  booktitle = {ICML},
  volume    = {42},
  pages     = {58142--58168},
  year      = {2025}
}

@article{tian2023glm,
  author  = {Tian, Ye and Feng, Yang},
  title   = {Transfer Learning under High-Dimensional Generalized Linear Models},
  journal = {J. Am. Stat. Assoc.},
  volume  = {118},
  number  = {544},
  pages   = {2684--2697},
  year    = {2023}
}

@article{tian2025similar,
  author  = {Tian, Ye and Gu, Yuqi and Feng, Yang},
  title   = {Learning from Similar Linear Representations: Adaptivity, Minimaxity, and Robustness},
  journal = {J. Mach. Learn. Res.},
  volume  = {26},
  number  = {187},
  pages   = {1--125},
  year    = {2025}
}

@inproceedings{tripuraneni2021provable,
  author    = {Tripuraneni, Nilesh and Jin, Chi and Jordan, Michael I.},
  title     = {Provable Meta-Learning of Linear Representations},
  booktitle = {ICML},
  volume    = {38},
  pages     = {10434--10443},
  year      = {2021}
}

@article {Stone82,
    AUTHOR = {Stone, Charles J.},
     TITLE = {Optimal global rates of convergence for nonparametric
              regression},
   JOURNAL = {Ann. Statist.},
    VOLUME = {10},
      YEAR = {1982},
    NUMBER = {4},
     PAGES = {1040--1053},
}

@article {Stone80,
    AUTHOR = {Stone, Charles J.},
     TITLE = {Optimal rates of convergence for nonparametric estimators},
   JOURNAL = {Ann. Statist.},
    VOLUME = {8},
      YEAR = {1980},
    NUMBER = {6},
     PAGES = {1348--1360},
}

@book {Villani03,
    AUTHOR = {Villani, C\'edric},
     TITLE = {Topics in optimal transportation},
    SERIES = {Graduate Studies in Mathematics},
    VOLUME = {58},
 PUBLISHER = {American Mathematical Society, Providence, RI},
      YEAR = {2003},
     PAGES = {xvi+370},
}

@book {Villani09,
    AUTHOR = {Villani, C\'edric},
     TITLE = {Optimal transport},
    SERIES = {Grundlehren der mathematischen Wissenschaften [Fundamental
              Principles of Mathematical Sciences]},
    VOLUME = {338},
      NOTE = {Old and new},
 PUBLISHER = {Springer-Verlag, Berlin},
      YEAR = {2009},
     PAGES = {xxii+973},
}

@article {Brenier87,
    AUTHOR = {Brenier, Yann},
     TITLE = {D\'ecomposition polaire et r\'earrangement monotone des champs
              de vecteurs},
   JOURNAL = {C. R. Acad. Sci. Paris S\'er. I Math.},
    VOLUME = {305},
      YEAR = {1987},
    NUMBER = {19},
     PAGES = {805--808},
}

@article {Brenier91,
    AUTHOR = {Brenier, Yann},
     TITLE = {Polar factorization and monotone rearrangement of
              vector-valued functions},
   JOURNAL = {Comm. Pure Appl. Math.},
    VOLUME = {44},
      YEAR = {1991},
    NUMBER = {4},
     PAGES = {375--417},
}

@article {Pass12,
    AUTHOR = {Pass, Brendan},
     TITLE = {Regularity of optimal transportation between spaces with
              different dimensions},
   JOURNAL = {Math. Res. Lett.},
    VOLUME = {19},
      YEAR = {2012},
    NUMBER = {2},
     PAGES = {291--307},
}

@article {GM96,
    AUTHOR = {Gangbo, Wilfrid and McCann, Robert J.},
     TITLE = {The geometry of optimal transportation},
   JOURNAL = {Acta Math.},
    VOLUME = {177},
      YEAR = {1996},
    NUMBER = {2},
     PAGES = {113--161},
}

@article {Ca92,
    AUTHOR = {Caffarelli, Luis A.},
     TITLE = {The regularity of mappings with a convex potential},
   JOURNAL = {J. Amer. Math. Soc.},
    VOLUME = {5},
      YEAR = {1992},
    NUMBER = {1},
     PAGES = {99--104},
}

@article {Ca922,
    AUTHOR = {Caffarelli, Luis A.},
     TITLE = {Boundary regularity of maps with convex potentials},
   JOURNAL = {Comm. Pure Appl. Math.},
    VOLUME = {45},
      YEAR = {1992},
    NUMBER = {9},
     PAGES = {1141--1151},
}

@article {CFJ17,
    AUTHOR = {Colombo, Maria and Figalli, Alessio and Jhaveri, Yash},
     TITLE = {Lipschitz changes of variables between perturbations of
              log-concave measures},
   JOURNAL = {Ann. Sc. Norm. Super. Pisa Cl. Sci. (5)},
    VOLUME = {17},
      YEAR = {2017},
    NUMBER = {4},
     PAGES = {1491--1519},
}

@article{CFS24,
  title={On optimal transport maps between 1/d-concave densities},
  author={Carlier, Guillaume and Figalli, Alessio and Santambrogio, Filippo},
journal={arXiv preprint arXiv:2404.05456},
  year={2024},
}

@article {Car00,
    AUTHOR = {Caffarelli, Luis A.},
     TITLE = {Monotonicity properties of optimal transportation and the
              {FKG} and related inequalities},
   JOURNAL = {Comm. Math. Phys.},
    VOLUME = {214},
      YEAR = {2000},
    NUMBER = {3},
     PAGES = {547--563},
}

@article {Ca96,
    AUTHOR = {Caffarelli, Luis A.},
     TITLE = {Boundary regularity of maps with convex potentials. {II}},
   JOURNAL = {Ann. of Math. (2)},
    VOLUME = {144},
      YEAR = {1996},
    NUMBER = {3},
     PAGES = {453--496},
}

@article {MP20,
    AUTHOR = {McCann, Robert J. and Pass, Brendan},
     TITLE = {Optimal transportation between unequal dimensions},
   JOURNAL = {Arch. Ration. Mech. Anal.},
    VOLUME = {238},
      YEAR = {2020},
    NUMBER = {3},
     PAGES = {1475--1520},
}

@article {MB24,
    AUTHOR = {Manole, Tudor and Balakrishnan, Sivaraman and Niles-Weed,
              Jonathan and Wasserman, Larry},
     TITLE = {Plugin estimation of smooth optimal transport maps},
   JOURNAL = {Ann. Statist.},
    VOLUME = {52},
      YEAR = {2024},
    NUMBER = {3},
     PAGES = {966--998},
}

@article{CNR24,
  title={Statistical optimal transport},
  author={Chewi, Sinho and Niles-Weed, Jonathan and Rigollet, Philippe},
  journal={arXiv preprint arXiv:2407.18163},
  year={2024},
}

@article{Cao23,
  title={Risk of transfer learning and its applications in finance},
  author={Cao, Haoyang and Gu, Haotian and Guo, Xin and Rosenbaum, Mathieu},
journal={arXiv preprint arXiv:2311.03283},
  year={2023},
}

@inproceedings{mahajan2025learning,
  title={Learning to Undo: Transfer Reinforcement Learning under Linear State Space Transformations},
  author={Mahajan, M. and Pacchiano, A. and Zhang, X.},
  booktitle={OpenReview},
  year={2025},
  url={https://openreview.net/forum?id=jI8a3s5xUz}
}

@article{ren2025partial,
  title={Partial Domain Adaptation via Importance Sampling-based Shift Correction},
  author={Ren, C. X. and Luo, Y. W. and Guo, C. J. and Xu, X. L.},
  journal={IEEE Trans. Neural Netw.},
  year={2025}
}

@article{xu2025ibex,
  title={IBEX: Information-Bottleneck-Explored Coarse-to-Fine Molecular Generation under Limited Data},
  author={Xu, D. and Yao, J. X. and Song, S. and Zhu, Z. and Ji, J.},
  journal={arXiv preprint arXiv:2508.10775},
  year={2025}
}

@article{apellaniz2025cancer,
  title={Advancing Cancer Research with Synthetic Data Generation in Low-Data Scenarios},
  author={Apellaniz, P. A. and Galende, B. A. and Jim, A.},
  journal={IEEE Journal of Biomedical Informatics},
  year={2025}
}

@article{bonneel2015sliced,
  title={Sliced and {R}adon {W}asserstein barycenters of measures},
  author={Bonneel, Nicolas and Rabin, Julien and Peyr{\'e}, Gabriel and Pfister, Hanspeter},
  journal={J. Math. Imaging Vision},
  volume={51},
  pages={22--45},
  year={2015}
}

@inproceedings{de2021diffusion,
  title={Diffusion schr{\"o}dinger bridge with applications to score-based generative modeling},
  author={De Bortoli, Valentin and Thornton, James and Heng, Jeremy and Doucet, Arnaud},
  booktitle={Neurips},
  volume={34},
  pages={17695--17709},
  year={2021},
}

@article{torous2021optimal,
  title={An Optimal Transport Approach to Estimating Causal Effects via Nonlinear Difference-in-Differences},
  author={Torous, William and Gunsilius, Florian and Rigollet, Philippe},
  journal={arXiv preprint arXiv:2108.05858},
  year={2021}
}

@article{courty2016optimal,
  title={Optimal transport for domain adaptation},
  author={Courty, Nicolas and Flamary, R{\'e}mi and Tuia, Devis and Rakotomamonjy, Alain},
  journal={IEEE Trans. Pattern Anal. Mach. Intell.},
  volume={39},
  number={9},
  pages={1853--1865},
  year={2016},
}

@inproceedings{eisenberg1990sample,
  title={On the sample complexity of pac-learning using random and chosen examples.},
  author={Eisenberg, Bonnie and Rivest, Ronald L},
  booktitle={COLT},
  volume={90},
  pages={154--162},
  year={1990}
}

@article{hanneke2016optimal,
  title={The optimal sample complexity of PAC learning},
  author={Hanneke, Steve},
  journal={J. Mach. Learn. Res.},
  volume={17},
  number={38},
  pages={1--15},
  year={2016}
}

@article{haussler1994bounds,
  title={Bounds on the sample complexity of Bayesian learning using information theory and the VC dimension},
  author={Haussler, David and Kearns, Michael and Schapire, Robert E},
  journal={Mach. Learn.},
  volume={14},
  number={1},
  pages={83--113},
  year={1994},
  publisher={Springer}
}

@inproceedings{ben2012hardness,
  title={On the hardness of domain adaptation and the utility of unlabeled target samples},
  author={Ben-David, Shai and Urner, Ruth},
  booktitle={ALT},
  pages={139--153},
  year={2012},
}

@inproceedings{devlin-etal-2019-bert,
    title = "{BERT}: Pre-training of Deep Bidirectional Transformers for Language Understanding",
    author = "Devlin, Jacob  and
      Chang, Ming-Wei  and
      Lee, Kenton  and
      Toutanova, Kristina",
    booktitle = "NACCL",
    volume={1},
    year = "2019",
    pages = "4171--4186",
}

@article{kim2022transfer,
  title={Transfer learning for medical image classification: A literature review},
  author={Kim, Hee E and Cosa-Linan, Alejandro and Santhanam, Nandhini and Jannesari, Mahboubeh and Maros, Mate E and Ganslandt, Thomas},
  journal={BMC Medical Imaging},
  volume={22},
  number={1},
  pages={69},
  year={2022},
  publisher={Springer}
}

@article{openai2023gpt4,
      title={G{P}{T}-4 {T}echnical {R}eport}, 
      author={OpenAI},
      journal = {arXiv preprint arXiv:2303.08774},
      year={2023},
}

@article{comanici2025gemini,
  title={Gemini 2.5: Pushing the frontier with advanced reasoning, multimodality, long context, and next generation agentic capabilities},
  author={Comanici, Gheorghe and Bieber, Eric and Schaekermann, Mike and Pasupat, Ice and Sachdeva, Noveen and Dhillon, Inderjit and Blistein, Marcel and Ram, Ori and Zhang, Dan and Rosen, Evan and others},
  journal={arXiv preprint arXiv:2507.06261},
  year={2025}
}

@article{grattafiori2024llama,
  title={The llama 3 herd of models},
  author={Grattafiori, Aaron and Dubey, Abhimanyu and Jauhri, Abhinav and Pandey, Abhinav and Kadian, Abhishek and Al-Dahle, Ahmad and Letman, Aiesha and Mathur, Akhil and Schelten, Alan and Vaughan, Alex and others},
  journal={arXiv preprint arXiv:2407.21783},
  year={2024}
}

@inproceedings{brunskill2013sample,
author = {Brunskill, Emma and Li, Lihong},
title = {Sample complexity of multi-task reinforcement learning},
year = {2013},
booktitle = {UAI},
volume = {29},
pages = {122–131},
}

@inproceedings{oguni2014reducing,
  title={Reducing Sample Complexity in Reinforcement Learning by Transferring Transition and Reward Probabilities},
  author={Oguni, Kouta and Narisawa, Kazuyuki and Shinohara, Ayumi},
  booktitle={ICAART},
  volume={2},
  pages={632--638},
  year={2014}
}

@inproceedings{tripuraneni2020theory,
  title={On the theory of transfer learning: The importance of task diversity},
  author={Tripuraneni, Nilesh and Jordan, Michael and Jin, Chi},
  booktitle={Neurips},
  volume={33},
  pages={7852--7862},
  year={2020}
}

@inproceedings{russo2023sample,
  title={On the sample complexity of representation learning in multi-task bandits with global and local structure},
  author={Russo, Alessio and Proutiere, Alexandre},
  booktitle={AAAI},
  volume={37},
  pages={9658--9667},
  year={2023}
}

@article{rigollet2025on,
author = {Philippe Rigollet and Austin J. Stromme},
title = {{On the sample complexity of entropic optimal transport}},
volume = {53},
journal = {Ann. Stat.},
number = {1},
pages = {61 -- 90},
year = {2025},
}

@article{vural2025unified,
  title={A Unified Analysis of Generalization and Sample Complexity for Semi-Supervised Domain Adaptation},
  author={Vural, Elif and Karaca, Huseyin},
  journal={arXiv preprint arXiv:2507.22632},
  year={2025}
}

@inproceedings{cuturi2013sinkhorn,
  title={Sinkhorn distances: Lightspeed computation of optimal transport},
  author={Cuturi, Marco},
  booktitle={NIPS},
  pages={2292–2300},
  volume={26},
  year={2013}
}

@article{zeng2019automatic,
  title={Automatic {I}{C}{D}-9 coding via deep transfer learning},
  author={Zeng, Min and Li, Min and Fei, Zhihui and Yu, Ying and Pan, Yi and Wang, Jianxin},
  journal={Neurocomputing},
  volume={324},
  pages={43--50},
  year={2019},
  publisher={Elsevier}
}

@article{wang2022transfer,
  title={Transfer Learning for Retinal Vascular Disease Detection: A Pilot Study with Diabetic Retinopathy and Retinopathy of Prematurity},
  author={Wang, Guan and Kikuchi, Yusuke and Yi, Jinglin and Zou, Qiong and Zhou, Rui and Guo, Xin},
  journal={arXiv preprint arXiv:2201.01250},
  year={2022}
}

@inproceedings{ruder2019transfer,
  title={Transfer learning in natural language processing},
  author={Ruder, Sebastian and Peters, Matthew E and Swayamdipta, Swabha and Wolf, Thomas},
  booktitle={NACCL},
  pages={15--18},
  year={2019}
}

@inproceedings{sung2022vl,
  title={Vl-adapter: Parameter-efficient transfer learning for vision-and-language tasks},
  author={Sung, Yi-Lin and Cho, Jaemin and Bansal, Mohit},
  booktitle={CVPR},
  pages = {5227--5237},
  year={2022}
}

@article{banerjee2005optimality,
  title={On the optimality of conditional expectation as a Bregman predictor},
  author={Banerjee, Arindam and Guo, Xin and Wang, Hui},
  journal={IEEE Trans. Inf. Theory},
  volume={51},
  number={7},
  pages={2664--2669},
  year={2005},
}

@inproceedings{saenko2010adapting,
  title={Adapting visual category models to new domains},
  author={Saenko, Kate and Kulis, Brian and Fritz, Mario and Darrell, Trevor},
  booktitle={Proceedings of the 11th European Conference on Computer Vision},
  pages={213--226},
  year={2010},
  organization={Springer}
}

@misc{chollet2015keras,
  title={Keras},
  author={Chollet, Fran\c{c}ois and others},
  year={2015},
  howpublished={https://keras.io},
}

@article {Fiersone20183061,
	author = {Fierson, Walter M.},
	title = {Screening Examination of Premature Infants for Retinopathy of Prematurity},
	volume = {142},
	number = {6},
	elocation-id = {e20183061},
	year = {2018},
	journal = {Pediatrics}
}

@article{Blencowe2013,
author = {Blencowe, Hannah and Lawn, Joy and Vazquez, Thomas and Fielder, Alistair and Gilbert, Clare},
year = {2013},
month = {12},
pages = {35-49},
title = {Preterm-associated visual impairment and estimates of retinopathy of prematurity at regional and global levels for 2010},
volume = {74 Suppl 1},
journal = {Pediatr. Res.},
}

@article{gilbert_rahi_eckstein_osullivan_foster_1997, 
title={Retinopathy of prematurity in middle-income countries}, 
volume={350}, 
number={9070}, 
journal={The Lancet}, 
author={Gilbert, Clare and Rahi, Jugnoo and Eckstein, Michael and O'sullivan, Jane and Foster, Allen}, 
year={1997}, 
pages={12–14}}

@article{10.1001/archopht.121.12.1684,
    author = {{Early Treatment For Retinopathy Of Prematurity Cooperative Group}},
    title = "{Revised Indications for the Treatment of Retinopathy of Prematurity: Results of the Early Treatment for Retinopathy of Prematurity Randomized Trial}",
    journal = {Archives of Ophthalmology},
    volume = {121},
    number = {12},
    pages = {1684-1694},
    year = {2003}
}

@article{permuter2006study,
  title={A study of Gaussian mixture models of color and texture features for image classification and segmentation},
  author={Permuter, Haim and Francos, Joseph and Jermyn, Ian},
  journal={Pattern recognition},
  volume={39},
  number={4},
  pages={695--706},
  year={2006},
  publisher={Elsevier}
}

\bibliographystyle{IEEEtran}

\end{document}